\icmltitlerunning{Conditional Distribution Quantification}
\newtheorem{theorem}{Theorem}[section]
\newtheorem*{theorem*}{Theorem}
\newtheorem{proposition}{Proposition}[section]
\theoremstyle{definition}
\theoremstyle{remark}
\newcommand{\field}[1]{\mathbb{#1}}
\newcommand{\R}{\field{R}}
\newcommand{\E}{\field{E}}
\renewcommand{\P}{\field{P}}
\newcommand{\Loi}{{\mathcal L}}
\newcommand{\W}{{\mathcal W}}
\newcommand{\abs}[1]{\lvert#1\rvert}
\newcommand{\bigabs}[1]{\bigl\lvert#1\bigr\rvert}
\newcommand{\bigpar}[1]{\bigl(#1\bigr)}
\newcommand{\Bigpar}[1]{\Bigl(#1\Bigr)}
\newcommand{\Bigcro}[1]{\Bigl[#1\Bigr]}
\begin{document}

\twocolumn[
    \icmltitle{Conditional Distribution Quantization in Machine Learning}
    % It is OKAY to include author information, even for blind
    % submissions: the style file will automatically remove it for you
    % unless you've provided the [accepted] option to the icml2021
    % package.

    % List of affiliations: The first argument should be a (short)
    % identifier you will use later to specify author affiliations
    % Academic affiliations should list Department, University, City, Region, Country
    % Industry affiliations should list Company, City, Region, Country

    % You can specify symbols, otherwise they are numbered in order.
    % Ideally, you should not use this facility. Affiliations will be numbered
    % in order of appearance and this is the preferred way.
    \icmlsetsymbol{equal}{*}

    \begin{icmlauthorlist}
    \icmlauthor{Blaise Delattre}{equal,fox,dauph}
    \icmlauthor{Sylvain Delattre}{equal,cite}
    \icmlauthor{Alexandre Vérine}{equal,ens}
    \icmlauthor{Alexandre Allauzen}{dauph,psl}
\end{icmlauthorlist}

\icmlaffiliation{fox}{FOXSTREAM, Vaulx-en-Velin, France}
\icmlaffiliation{dauph}{Miles Team, LAMSADE, Universit\'e Paris-Dauphine, PSL University, Paris, France}
\icmlaffiliation{cite}{Universit\'e Paris Cit\'e and Sorbonne Universit\'e, CNRS, Laboratoire de Probabilit\'es, Statistique et Mod\'elisation, Paris, France}
\icmlaffiliation{ens}{\'Ecole Normale Sup\'erieure, Université PSL, DIENS, Paris, France
 }
\icmlaffiliation{psl}{ESPCI PSL, Paris, France}

\icmlcorrespondingauthor{Blaise Delattre}{bldelattre@gmail.com}

    % You may provide any keywords that you
    % find helpful for describing your paper; these are used to populate
    % the "keywords" metadata in the PDF but will not be shown in the document
    \icmlkeywords{Machine Learning, ICML}

    \vskip 0.3in
]

% this must go after the closing bracket ] following \twocolumn[ ...

% This command actually creates the footnote in the first column
% listing the affiliations and the copyright notice.
% The command takes one argument, which is text to display at the start of the footnote.
% The \icmlEqualContribution command is standard text for equal contribution.
% Remove it (just {}) if you do not need this facility.

%\printAffiliationsAndNotice{}  % leave blank if no need to mention equal contribution
\printAffiliationsAndNotice{\icmlEqualContribution} % otherwise use the standard text.

\begin{abstract}
    Conditional expectation \( \mathbb{E}(Y \mid X) \) often fails to capture the complexity of multimodal conditional distributions \( \mathcal{L}(Y \mid X) \). To address this, we propose using \( n \)-point conditional quantizations—functional mappings of \( X \) that are learnable via gradient descent—to approximate \( \mathcal{L}(Y \mid X) \).
    This approach adapts Competitive Learning Vector Quantization (CLVQ), tailored for conditional distributions.
    It goes beyond single-valued predictions by providing multiple representative points that better reflect multimodal structures. It enables the approximation of the true conditional law in the Wasserstein distance. The resulting framework is theoretically grounded and useful for uncertainty quantification and multimodal data generation tasks. For example, in computer vision inpainting tasks, multiple plausible reconstructions may exist for the same partially observed input image \( X \). We demonstrate the effectiveness of our approach through experiments on synthetic and real-world datasets.
\end{abstract}

\section{Introduction}

When analyzing the relationship between random variables \( X \) and \( Y \), the conditional expectation \( \mathbb{E}(Y \mid X) \) is often used as a summary statistic to describe \( Y \) given \( X \).
However, for multimodal or complex conditional distributions, \( \mathbb{E}(Y \mid X) \) may fail to capture critical information about the underlying distribution. For instance, consider a case where:
\(
\P(Y = X + 100 \mid X) = \frac{1}{2}, \quad \P(Y = X - 100 \mid X) = \frac{1}{2},
\)
then:
\(
\E(Y \mid X) = X, \quad \text{but} \quad \Loi(Y \mid X) = \frac{1}{2}\delta_{X+100} + \frac{1}{2}\delta_{X-100}.
\)
where $\delta$ stands for Dirac mass.
While the expectation \( \mathbb{E}(Y \mid X) \) returns \( X \), the true conditional law \( \Loi(Y \mid X) \) reveals a richer, multimodal structure that is entirely missed by relying solely on the mean.

This limitation has important implications for downstream tasks, particularly in uncertainty quantification for image restoration models used in safety-critical domains such as autonomous driving and biological imaging. Many existing approaches rely on per-pixel estimates, such as variance heatmaps~\citep{kendall2017uncertainties} or confidence intervals~\citep{angelopoulos2022image} to visualize uncertainty. 
While these methods provide valuable insights, they can struggle to represent structured uncertainty, overlooking spatial correlations between neighboring pixels.

\citet{bishop1994mixture}'s work underscores this limitation, showing that conditional expectation works well when \( Y \) lies in a corner of the simplex but often fails in other contexts. Instead, Bishop proposes a Mixture Density Networks, which model the full conditional distribution, allowing for a richer representation of \( Y \) given \( X \).
More recent works utilize PCA on the conditional distribution to model uncertainty in image reconstruction~\citep{nehme2023uncertainty, manor2024posterior}. These approaches focus on identifying the dominant modes of variation (principal components) to capture uncertainty. However, they are local methods, analyzing variability only around the conditional mean.

To address this limitation, we propose approximating \( \Loi(Y \mid X) \) using \( n \)-point quantizations that adapt to \( X \).
Our approach, termed \textit{Conditional Competitive Learning Vector Quantization} (CCLVQ), extends \textit{Competitive Learning Vector Quantization}~\citep{PagesBouton, kohonen1995clvq} to conditional distributions.
In this framework, the \( n \)-point representations are functional mappings $\{f_i\}_{i=1}^n$ of \( X \), enabling flexible and expressive summaries of \( Y \) while capturing multimodal structures. In particular, the $f_i$ can be parameterized as deep neural networks.
Our contributions are:
\begin{itemize}
    \item Theoretical Foundation of CCLVQ: We extend CLVQ to conditional distributions and provide theoretical guarantees for quantizing \( \Loi(Y \mid X) \) with adaptive representations.

    \item Algorithm Design: We propose an efficient training algorithm for CCLVQ that learns an ensemble of experts to collaboratively partition the output space.

    \item Uncertainty Quantification in Image Restoration and Generation: We apply CCLVQ to uncertainty quantification tasks, such as inpainting and denoising, capturing the multimodal nature of \( \Loi(Y \mid X) \) and providing diverse, plausible reconstructions with quantitative uncertainty.  We also apply the method to improve quality and diversity in generative models.
\end{itemize}
\section{Related Works}
Understanding and modeling uncertainty in deep learning has been a long-standing challenge, with various approaches proposed to quantify and manage uncertainty across tasks.

\textbf{Pixel-based approaches}
Pixel-wise uncertainty quantification methods, such as per-pixel variance heatmaps~\citep{kendall2017uncertainties}, confidence intervals~\citep{angelopoulos2022image}, and conformal prediction masks~\citep{kutiel2023conformal}, often neglect spatial correlations between pixels, resulting in overconfident predictions and limited diversity in reconstructions. Many of these methods rely on Bayesian approaches~\cite {gal2016dropout, blundell2015weight}, which, while theoretically sound, are computationally intensive and difficult to scale. Methods that explore the conditional distribution by producing diverse reconstructions while accounting for pixel correlations are essential.

\textbf{PCA-based approaches}
Building on the idea of exploring conditional distributions, Principal Component Analysis (PCA) provides a form of uncertainty quantization~\citep{nehme2023uncertainty, manor2024posterior} that focuses on variability around the conditional mean \( \mathbb{E}(Y \mid X) \). A similar method has been used to regularize the top principal components of the conditional covariance for enhanced posterior sampling in generation with GANs~\citep{bendel2024pcagan}.
However, PCA-based methods face critical limitations: they capture only linear dependencies and local variability, neglecting global or non-linear structures.
This makes them inherently unsuited for modeling multimodal distributions. Furthermore, PCA assumes unimodal variability, quantifying uncertainty in \( \ell_2 \)-norm around the mean, which restricts their ability to provide diverse reconstructions.

\textbf{Ensemble of Experts}
Deep Ensembles~\citep{lakshminarayanan2017simple} provide a scalable alternative by training multiple independent networks, combining their outputs to capture both epistemic and aleatoric uncertainty, demonstrating robustness to dataset shifts and well-calibrated predictions.
Mixture Density Networks (MDN)~\citep{bishop1994mixture} predict the parameters of a mixture distribution, enabling them to capture multimodal outputs and quantify predictive uncertainty effectively. This makes them particularly suitable for tasks where the conditional distribution \( \mathcal{L}(Y \mid X) \) is complex or multimodal.
MDNs approximate complex distributions by predicting the parameters of probabilistic mixtures, while quantization methods achieve a similar goal by representing the distribution with a finite set of adaptive points.

\textbf{Multiple Choice Learning} MCL was introduced by \citet{guzman2012mcl} to address prediction under ambiguity by training $n$ predictors jointly with a winner-takes-all (WTA) assignment strategy. Subsequent extensions have aimed at improving stability and diversity: stochastic MCL (sMCL) \citep{lee2016stochasticmcl} performs batch-wise assignments, confident MCL (cMCL) \cite{lee2017confidentmcl} adds an entropic regularization to encourage uniform usage of predictors, and \citet{rupprecht2017learning} provides a probabilistic interpretation of MCL through mixture modeling. More recently, resilient MCL (rMCL) \citep{letzelter2023resilient} introduces a learned scoring scheme for hypothesis selection, underpinned by a Voronoi-based probabilistic framework to preserve output diversity. Annealed MCL \cite{perera2024annealed} further extends MCL by introducing an annealing schedule that gradually relaxes WTA constraints during training, thus improving optimization stability. 
More recently, the work of \citet{wta2024letzelter} explores the probabilistic interpretation of Winner-Takes-All (WTA) training, highlighting its connection to conditional density estimation, which is close to the work in this paper.

In parallel, implicit generative models offer alternative strategies for modeling multimodality. Implicit Maximum Likelihood Estimation (IMLE) \citep{li2019implicit} avoids explicit likelihoods by matching real samples to generated outputs, ensuring distributional coverage and preventing mode collapse. CHIMLE \citep{peng2022chimle} extends this idea to conditional generation through a hierarchical formulation, synthesizing diverse outputs in complex multimodal tasks. These methods provide complementary approaches to ambiguity modeling: MCL emphasizes deterministic expert selection, whereas IMLE-based techniques leverage stochasticity to ensure coverage of the output space.

\textbf{Quantization methods}
Lloyd's K-means algorithm~\citep{lloyd1982least} is a foundational quantization method that iteratively minimizes the distortion between data points and their nearest centroids by alternating between assignment and update steps. While effective, it requires exhaustive traversal of the dataset in each iteration, making it computationally expensive for large-scale data. Extensions of K-means, such as Competitive Learning Vector Quantization (CLVQ)~\citep{kohonen1995clvq, PagesBouton}, focus on adaptively learning quantization points through a competitive process.

In the following, we propose a method based on CLVQ, resembling an ensemble of experts, providing a recipe for training diverse networks to quantize simply the conditional distribution.

\section{Background}
Here, we present some well-known results, including the proofs showing the extent to which the quantization of a random vector provides an approximation of its distribution for the Wasserstein distance with a distribution with finite support.
Refer to \cite{Graf&Luschgy} or \cite{procPages} for more details.

\subsection{Wasserstein distance}
The Wasserstein distance of order $2$ between two distributions $\mu$ and $\nu$ on $\R^d$ is defined
as
$$ \W_2(\mu,\nu) = \inf_{Y_1\sim\mu, Y_2\sim\nu} \Bigpar{ \E\Bigpar{\abs{Y_1-Y_2}^2 } }^{1/2}$$
where the infimum is taken over all pairs of $d$-dimensional random vectors $Y_1$ and $Y_2$ marginally distributed as $\mu$ and $\nu$, respectively. Here, $\abs{\cdot}$ denotes the Euclidean norm on $\R^d$.
As long as $\mu$ and $\nu$ have finite second moments, $\W_2(\mu,\nu)$ is finite.

The Wasserstein distance is a powerful metric between probability measures. Unlike measures such as total variation or Kullback-Leibler divergence, it respects the geometry of the underlying space \(\mathbb{R}^d\), making it particularly meaningful in applications requiring geometric fidelity~\citep{peyre2019computational}.

\subsection{Vector Quantization to approximate a distribution}
Let $Y$ be a random vector with values in $\R^d$ and let $n \ge 1$, an integer.
We denote by $|\cdot|$ the Euclidean norm on $\R^d$.
Vector quantization, including methods like \(k\)-means and Learning Vector Quantization (LVQ), aims to minimize the functional
\[
    D_n(\alpha) = \mathbb{E}\Bigl[\min_{1 \leq i \leq n} \bigabs{Y - \alpha_i}^2\Bigr],
\]
where \(\alpha = (\alpha_1, \dots, \alpha_n) \in (\mathbb{R}^d)^n\) represents the quantizer.
We assume that $\E(\abs{Y}^2) <\infty$ so that $D_{n}(\alpha)<\infty$
for every $\alpha$.
This functional arises naturally in applications such as signal transmission, clustering, and numerical integration~\citep{procPages}. For a given quantizer \(\alpha \in (\mathbb{R}^d)^n\),
one chooses a closest neighbor projection $\pi_\alpha:\R^d\to\{\alpha_1,\dots,\alpha_n\}$
that is a map such that, for all $y\in\R^d$,
\begin{equation} \label{projection}
    \pi_\alpha(y) \in \{\alpha_1, \dots, \alpha_n\} \ \text{and} \
    |y - \pi_\alpha(y)| = \min_{1 \leq i \leq n} |y - \alpha_i |
\end{equation}
This choice specifies how ties are broken. The induced $\alpha$-quantization of $Y$ is the random vector $\pi_\alpha(Y)$ and we have
%The quantized version of \(Y\), denoted \(\pi_\alpha(Y)\), has distortion
\[
    D_n(\alpha) = \mathbb{E}\Bigl[\bigabs{Y - \pi_\alpha(Y)}^2\Bigr] .
\]
Consequently, the Wasserstein distance of order 2 between the distribution of $Y$ and $\pi_\alpha(Y)$ satisfies
\begin{equation} \label{majW}
    \mathcal{W}_2\bigl(\mathcal{L}(Y), \mathcal{L}(\pi_\alpha(Y))\bigr) \leq D_n(\alpha)^{1/2}.
\end{equation}
Moreover, \(\mathcal{L}(\pi_\alpha(Y))\) can be expressed as a weighted sum:
\[
    \mathcal{L}(\pi_\alpha(Y)) = \sum_{i=1}^n \mathbb{P}(\pi_\alpha(Y) = \alpha_i) \delta_{\alpha_i},
\]
allowing for practical approximations of \(\mathcal{L}(Y)\) through estimated weights \(\mathbb{P}(\pi_\alpha(Y) = \alpha_i)\).

We have more than the upper bound \eqref{majW}: $D_n(\alpha)^{1/2}$ is the Wasserstein distance of $\Loi(Y)$ to $\Loi(\pi_\alpha(Y))$ and is also the Wasserstein distance of $\Loi(Y)$ to the set of distributions with support in $\{\alpha_1,\dots,\alpha_n\}$.
\begin{proposition} \label{simple1}
    For every $\alpha \in(\R^d)^n$ we have
    \begin{multline*} \mathcal{W}_2\bigl(\mathcal{L}(Y), \mathcal{L}(\pi_\alpha(Y))\bigr) = D_n(\alpha)^{1/2}\\
        = \min_{\nu\in\mathcal{P}(\alpha_1,\dots,\alpha_n)} \W_2\bigpar{\Loi(Y),\nu}
    \end{multline*}
    where $\mathcal{P}(\alpha_1,\dots,\alpha_n)$ denotes the set of distributions with support in the set $\{\alpha_1,\dots,\alpha_n\}$.
\end{proposition}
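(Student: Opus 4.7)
The proof proceeds by establishing a cycle of three inequalities among the three quantities in question. Observation~\eqref{majW} already supplies one direction, so the remaining work is a matching lower bound coming from the projection property~\eqref{projection}.

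\textbf{Step 1 (upper bound via the explicit coupling).} The inequality $\W_2(\Loi(Y),\Loi(\pi_\alpha(Y))) \le D_n(\alpha)^{1/2}$ is exactly \eqref{majW}: it is obtained by using the canonical coupling $(Y,\pi_\alpha(Y))$, whose second moment of the squared difference is $D_n(\alpha)$ by the definition of $\pi_\alpha$ and $D_n$.

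\textbf{Step 2 (the empirical law lies in $\mathcal{P}(\alpha_1,\dots,\alpha_n)$).} Since $\pi_\alpha(Y)$ takes values in $\{\alpha_1,\dots,\alpha_n\}$, the distribution $\Loi(\pi_\alpha(Y))$ belongs to $\mathcal{P}(\alpha_1,\dots,\alpha_n)$. Hence
\[
\min_{\nu\in\mathcal{P}(\alpha_1,\dots,\alpha_n)} \W_2(\Loi(Y),\nu) \;\le\; \W_2\bigl(\Loi(Y),\Loi(\pi_\alpha(Y))\bigr).
\]

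\textbf{Step 3 (matching lower bound from the projection).} Fix any $\nu \in \mathcal{P}(\alpha_1,\dots,\alpha_n)$ and any coupling $(Y,Z)$ with $Z\sim\nu$. Because $Z\in\{\alpha_1,\dots,\alpha_n\}$ almost surely, the defining property~\eqref{projection} of $\pi_\alpha$ yields the pointwise inequality $\abs{Y-Z}^2 \ge \min_{1\le i\le n} \abs{Y-\alpha_i}^2 = \abs{Y-\pi_\alpha(Y)}^2$. Taking expectations gives $\E\abs{Y-Z}^2 \ge D_n(\alpha)$. Since this holds for every coupling, we obtain $\W_2(\Loi(Y),\nu)^2 \ge D_n(\alpha)$, and then, minimizing over $\nu$,
\[
\min_{\nu\in\mathcal{P}(\alpha_1,\dots,\alpha_n)} \W_2(\Loi(Y),\nu) \;\ge\; D_n(\alpha)^{1/2}.
\]

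\textbf{Conclusion.} Chaining Steps 1--3 gives
\[
D_n(\alpha)^{1/2} \le \min_{\nu} \W_2(\Loi(Y),\nu) \le \W_2(\Loi(Y),\Loi(\pi_\alpha(Y))) \le D_n(\alpha)^{1/2},
\]
so all three quantities coincide. The only non-bookkeeping step is Step 3, and even there the key observation is elementary: any distribution supported on $\{\alpha_1,\dots,\alpha_n\}$ is forced, under any coupling with $Y$, to be at least as far from $Y$ as the nearest-neighbor projection $\pi_\alpha(Y)$. I do not anticipate a genuine obstacle; the only subtle point worth spelling out is that $\pi_\alpha$ exists as a measurable map (so that $(Y,\pi_\alpha(Y))$ is a valid coupling), which follows from the measurable selection implicit in~\eqref{projection} after a fixed tie-breaking rule.
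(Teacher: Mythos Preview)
Your proof is correct and follows essentially the same route as the paper's: the upper bound is taken from~\eqref{majW}, and the matching lower bound comes from observing that any coupling $(Y,Z)$ with $Z$ supported in $\{\alpha_1,\dots,\alpha_n\}$ satisfies $\abs{Y-Z}^2 \ge \min_i \abs{Y-\alpha_i}^2$ pointwise. Your Step~2 makes explicit what the paper leaves implicit (that $\Loi(\pi_\alpha(Y))\in\mathcal{P}(\alpha_1,\dots,\alpha_n)$), but the argument is otherwise identical.
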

\begin{proof}
    In view of the upper bound \eqref{majW}, it suffices to prove that $\W_2\bigpar{\Loi(Y),\nu}\ge D_n(\alpha)^{1/2}$ for every $\nu\in\mathcal{P}(\alpha_1,\dots,\alpha_n)$. Let $\nu\in\mathcal{P}(\alpha_1,\dots,\alpha_n)$.
    If $\tilde Y$ and $\hat Y$ are random vectors with values in $\R^d$ such that
    $\Loi(\tilde Y)=\Loi(Y)$ and $\Loi(\hat Y)=\nu$, then $\P(\hat Y\in\{\alpha_1,\dots,\alpha_n\})=1$ which implies
    $$ \E\bigpar{\abs{\tilde Y-\hat Y}^2} \ge \E\bigpar{\min_{1\le i \le n} \abs{\tilde Y-\alpha_i}^2}
        =D_{n}(\alpha)$$
\end{proof}
\subsection{About optimal quantizers}
There exists at least one optimal quantizer, that is $\alpha^*\in(\R^d)^n$ such that
$$ D_n(\alpha^*)= \inf_{\alpha\in(\R^d)^n} D_n(\alpha), $$
see \cite{procPages} for a proof.
The following result shows why it is worth minimizing $D_n$:
\begin{proposition} \label{simple2} If $\alpha^*$ is optimal, one has
    \begin{multline*} \mathcal{W}_2\bigpar{\Loi(Y), \Loi(\pi_{\alpha^*}(Y))}=D_{n}(\alpha^*)^{1/2}\\
        = \min_{\nu\in\mathcal{P}_n(\R^d)} \W_2\bigpar{\Loi(Y),\nu}
    \end{multline*}
    where $\mathcal{P}_n(\R^d)$ denotes the set of distribution on $\R^d$  whose support has a cardinality smaller than $n$.
\end{proposition}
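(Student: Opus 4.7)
The plan is to prove the two equalities in sequence. The first equality, $\W_2(\Loi(Y), \Loi(\pi_{\alpha^*}(Y))) = D_n(\alpha^*)^{1/2}$, follows immediately by applying Proposition \ref{simple1} with $\alpha = \alpha^*$, so no additional work is needed there.

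For the second equality, I would establish the two inequalities separately. The bound $\min_{\nu\in\mathcal{P}_n(\R^d)} \W_2(\Loi(Y),\nu) \leq D_n(\alpha^*)^{1/2}$ is immediate, because $\Loi(\pi_{\alpha^*}(Y))$ belongs to $\mathcal{P}_n(\R^d)$: its support is contained in $\{\alpha_1^*, \dots, \alpha_n^*\}$, which has cardinality at most $n$. The reverse inequality is the substantive step. Given an arbitrary $\nu \in \mathcal{P}_n(\R^d)$, I would enumerate its support as $\{\beta_1, \dots, \beta_k\}$ with $k \leq n$, and extend this to a quantizer vector $\beta = (\beta_1, \dots, \beta_n) \in (\R^d)^n$ by padding with arbitrary repetitions of existing points. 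This padding leaves the set $\{\beta_1, \dots, \beta_n\}$ unchanged, so Proposition \ref{simple1} applies and yields $\W_2(\Loi(Y), \nu) \geq D_n(\beta)^{1/2}$. Optimality of $\alpha^*$ then gives $D_n(\beta) \geq D_n(\alpha^*)$, which completes the chain.

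The main conceptual move is the reduction of the variational problem over measures in $\mathcal{P}_n(\R^d)$ to the minimization of $D_n$ over quantizer vectors in $(\R^d)^n$, which Proposition \ref{simple1} has already handled. The one mild subtlety, which I expect to be the only real obstacle, is correctly handling the padding when $\nu$ has strictly fewer than $n$ support points; duplicating points is innocuous because $D_n(\beta)$ depends on the $\beta_i$ only through the underlying finite set via the nearest-neighbor projection, so no value of $D_n$ is affected.
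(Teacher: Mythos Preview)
Your proposal is correct and is essentially the same argument as the paper's, just unpacked more explicitly. The paper compresses your two inequalities into the single observation that $\bigcup_{\alpha\in(\R^d)^n}\mathcal{P}(\alpha_1,\dots,\alpha_n)=\mathcal{P}_n(\R^d)$, so that minimizing $D_n(\alpha)^{1/2}=\min_{\nu\in\mathcal{P}(\alpha_1,\dots,\alpha_n)}\W_2(\Loi(Y),\nu)$ over $\alpha$ directly yields $\min_{\nu\in\mathcal{P}_n(\R^d)}\W_2(\Loi(Y),\nu)$; your padding step is exactly what verifies the nontrivial inclusion in that union.
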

\begin{proof}
    From Proposition \ref{simple1} one gets
    $$D_n(\alpha^*)^{1/2}=\min_{\alpha\in(\R^d)^n} \min_{\nu\in\mathcal{P}(\alpha_1,\dots,\alpha_n)} \W_2\bigpar{\Loi(Y),\nu},$$
    and moreover, since $\bigcup_{\alpha} \mathcal{P}(\alpha_1,\dots,\alpha_n)=\mathcal{P}_n(\R^d)$,
    one has
    \begin{multline*}
        \min_{\alpha\in(\R^d)^n} \min_{\nu\in\mathcal{P}(\alpha_1,\dots,\alpha_n)} \W_2\bigpar{\Loi(Y),\nu}
        = \\ \min_{\nu\in\mathcal{P}_n(\R^d)} \W_2\bigpar{\Loi(Y),\nu}.
    \end{multline*}
    %TO DO  It's Lemma 3.4 of \cite{Graf&Luschgy} quand $\alpha=\alpha^*$
\end{proof}

\subsection{Competitive Learning Vector Quantization}
A way to minimize $D_n(\alpha)$ and to obtain the corresponding minimizer $\alpha$ is to use Competitive Learning Vector Quantization (CLVQ).
In this section, we briefly present the CLVQ algorithm and explain on which condition it is similar to gradient descent.

Let $\mathcal{D} = \{y_j\}_{j=1}^N$ be a dataset.
For all $\alpha\in\R^d$ and $y\in\R^d$, one chooses $I_\alpha(y)\in\{1,\dots,n\}$ such that
$$\abs{y-\alpha_{I_\alpha(y)} } = \min_{1\le i\le n}\abs{y-\alpha_i}$$

To minimize the distortion, CLVQ  consists of the recursive scheme where one iteration is written as:

$$ \alpha \longleftarrow \alpha - \gamma\Bigpar{1_{\{I_\alpha(Y)= i\}} (\alpha_i-Y)}_{1\le i\le n}$$

where $Y$ is sampled from $\mathcal{D}$ and $\gamma\in(0,1]$ is the learning rate, the winning $\alpha_i$ is getting closer to $Y$.
This is almost a stochastic gradient descent:
\begin{proposition}[\cite{procPages}]
    For all $\alpha\in(\R^d)^n$ such that
    \begin{equation}
        \label{eq:hypothesis}
        \P\Bigpar{\exists\ i\not=j : \abs{Y-\alpha_i}=\abs{Y-\alpha_j}} = 0,
    \end{equation}
    $D_n$ is differentiable at $\alpha$ and
    $$\nabla D_n(\alpha) = 2\E\Bigcro{\Bigpar{1_{\{I_\alpha(Y)=i\}} (\alpha_i-Y)}_{1\le i\le n}}$$
\end{proposition}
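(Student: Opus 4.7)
The plan is to write $D_n(\alpha)=\E\bigcro{\phi(\alpha,Y)}$ where $\phi(\alpha,y)=\min_{1\le i\le n}\abs{y-\alpha_i}^2$, and to differentiate under the expectation by the dominated convergence theorem. Each of the maps $\alpha\mapsto\abs{y-\alpha_i}^2$ is a smooth quadratic, so $\phi(\cdot,y)$ is a minimum of finitely many smooth functions and thus locally Lipschitz; the difficulty is only that $\phi(\cdot,y)$ may fail to be differentiable at points where the minimum is attained by several indices. The hypothesis \eqref{eq:hypothesis} is exactly what kills this bad set in $y$.

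First I would establish pointwise differentiability of $\phi(\cdot,y)$ at $\alpha$. Fix $y\in\R^d$ such that the closest neighbor index $I_\alpha(y)$ is unique — by \eqref{eq:hypothesis} this holds for $P_Y$-a.e.\ $y$. A continuity argument then shows that for all $h\in(\R^d)^n$ with $\abs{h}$ small enough (depending on $y$), one still has $I_{\alpha+h}(y)=I_\alpha(y)$, so
\[
\phi(\alpha+h,y)=\bigabs{y-\alpha_{I_\alpha(y)}-h_{I_\alpha(y)}}^2,
\]
which is a smooth function of $h$. Reading off the gradient gives
\[
\nabla_\alpha \phi(\alpha,y)=2\Bigpar{1_{\{I_\alpha(y)=i\}}(\alpha_i-y)}_{1\le i\le n}.
\]

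Next I would produce an integrable envelope for the difference quotients. Using the elementary bound $\abs{\min_i a_i-\min_i b_i}\le\max_i\abs{a_i-b_i}$ and the identity $\abs{y-\alpha_i-th_i}^2-\abs{y-\alpha_i}^2=-2t\brac{y-\alpha_i,h_i}+t^2\abs{h_i}^2$, one obtains, for $t\in(0,1]$,
\[
\frac{\bigabs{\phi(\alpha+th,y)-\phi(\alpha,y)}}{t}\le 2\abs{h}\max_{1\le i\le n}\abs{y-\alpha_i}+\abs{h}^2,
\]
and the right-hand side is in $L^1(P_Y)$ because $\E(\abs{Y}^2)<\infty$ implies $\E(\abs{Y})<\infty$. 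Dominated convergence then legitimates passing to the limit $t\downarrow 0$ inside the expectation, yielding the directional derivative of $D_n$ at $\alpha$ in every direction $h$. Since the map $h\mapsto\E\cro{\nabla_\alpha\phi(\alpha,Y)\cdot h}$ is linear and continuous in $h$, $D_n$ is (Fréchet-)differentiable at $\alpha$ with gradient exactly as stated.

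The only genuine obstacle is handling the non-smoothness of $\phi$ on the tie set, which is exactly what hypothesis \eqref{eq:hypothesis} disposes of; the rest is a routine exchange of limit and integral whose domination comes for free from the quadratic structure of $\abs{y-\alpha_i}^2$ and the assumption $\E(\abs{Y}^2)<\infty$.
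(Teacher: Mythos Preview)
Your argument is correct. Note, however, that the paper does not supply its own proof of this proposition: it is stated with attribution to \cite{procPages} and used as a known result, so there is nothing in the paper to compare your proof against. The route you take---pointwise differentiability of $\alpha\mapsto\min_i\abs{y-\alpha_i}^2$ off the tie set (which is $P_Y$-null by \eqref{eq:hypothesis}), an integrable envelope for the difference quotients coming from the quadratic structure and the assumption $\E(\abs{Y}^2)<\infty$, and then dominated convergence---is the standard and natural argument for this fact.
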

Note that for optimal $\alpha^\star$ Assumption~\ref{eq:hypothesis} is always true, see \cite{Graf&Luschgy}.

While vector quantization provides a robust framework for approximating static distributions, many real-world scenarios involve conditional dependencies. For instance, the distribution of \(Y\) may vary with an auxiliary random variable \(X\). To address this, we extend vector quantization to \textit{conditional quantization}, where the quantizers adapt to \(X\) as learnable functions, allowing the approximation of the conditional distribution of $Y$ given $X$.% \(Q(x, dy) = \mathbb{P}(Y \in dy \mid X = x)\).

\section{Conditional Quantization}
Let $X$ and $Y$ be two random vectors, defined on the same probability space, where $X$ takes its values in an arbitrary space $E$ and $Y$ takes its values in $\R^d$.
The objective is to construct an approximation of the conditional law \(Q(x, dy) = \mathbb{P}(Y \in dy \mid X = x)\), which describes the distribution of \(Y\) given \(X = x\). More precisely, $Q(x,\cdot)$, $x\in E$,
is family of distribution on $\R^d$ such that for all test function $F:\R^d\to\R$ one has, with probability $1$:
$$ \E\bigpar{F(Y) \mid X} = \int_{\R^d} F(y) Q(X,dy) .$$

For a function \(f = (f_1, \dots, f_n): E \to (\mathbb{R}^d)^n\), define:
\[
    \Delta_n(f) = \mathbb{E}\Bigl[\min_{1 \leq i \leq n} \bigl|Y - f_i(X)\bigr|^2\Bigr].
\]

The framework of conditional quantization focuses on minimizing \(\Delta_n(f)\) to approximate \(Q(X, \cdot)\) in the Wasserstein sense. This approach generalizes the concept of vector quantization by making the quantizers dependent on \(X\), enabling flexible modeling of conditional distributions. We describe this formalization and its practical applications below.

\textbf{Integrability assumption}
We assume that there exists $f:E\to(\R^d)^n$ such that $\Delta_n(f)<\infty$. This implies that $E\bigpar{\abs{Y}^2 \mid X} < \infty$ with probability $1$ since
$$\abs{Y}\le \min_{1\le i\le n}\abs{Y-f_i(X)} + \max_{1\le i \le n} \abs{f_i(X}.$$
In other words, with probability $1$:
$$\int_{\R^d} \abs{y}^2 Q(X,dy) <\infty .$$

The distortion functional \(\Delta_n(f)\) can also be expressed as:
\[
    \Delta_n(f) = \mathbb{E}\Bigl[\bigl|Y - \pi_{f(X)}(Y)\bigr|^2\Bigr],
\]
where \(\pi_{f(X)}(Y)\) is a closest neighbor projection defined by \eqref{projection},
and the conditional law of \(\pi_{f(X)}(Y)\) given \(X = x\) can be written as:
\[
    \widehat{Q}_f(x, \cdot) = \sum_{i=1}^n Q\bigl(x, \{y \in \mathbb{R}^d : \pi_{f(x)}(y) = f_i(x)\}\bigr)\,\delta_{f_i(x)}(\cdot).
\]

\textbf{Connection to the Wasserstein distance}
Applying Proposition \ref{simple1} ``conditionally to $X$", one obtains:
\begin{theorem}
    \begin{enumerate}
        \item
              With probability $1$, one has
              \begin{align*}
                   & \mathbb{E}\Bigl[\min_{1 \leq i \leq n} \bigl|Y - f_i(X)\bigr|^2 \mid X\Bigr]                     \\
                   & = \int_{\mathbb{R}^d} \min_{1 \leq i \leq n} \bigl|y - f_i(X)\bigr|^2 Q(X, dy)                   \\
                   & = \mathcal{W}_2\bigl(Q(X, \cdot), \widehat{Q}_f(X, \cdot)\bigr)^2                                \\
                   & = \min_{\nu \in \mathcal{P}(f_1(X), \dots, f_n(X))} \mathcal{W}_2\bigl(Q(X, \cdot), \nu\bigr)^2,
              \end{align*}
              where \(\mathcal{P}(f_1(X), \dots, f_n(X))\) denotes the set of distributions with support in the set \(\{f_1(X), \dots, f_n(X)\}\).
        \item On has
              %By integrating with respect to \(X\), we obtain:
              \[
                  \Delta_n(f) = \mathbb{E}\Bigl[\mathcal{W}_2\bigl(Q(X, \cdot), \widehat{Q}_f(X, \cdot)\bigr)^2\Bigr].
              \]
    \end{enumerate}
\end{theorem}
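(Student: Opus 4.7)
The plan is to apply Proposition~\ref{simple1} pointwise in $x \in E$, with the regular conditional distribution $Q(x,\cdot)$ playing the role of $\Loi(Y)$ and the tuple $\alpha = (f_1(x),\dots,f_n(x)) \in (\R^d)^n$ serving as the quantizer. The integrability assumption ensures that $\int |y|^2\, Q(X,dy) < \infty$ almost surely, so the second-moment hypothesis of Proposition~\ref{simple1} holds $\Loi(X)$-a.s., making the conditional application legitimate.

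For part~1, I would first note that the equality
$\mathbb{E}[\min_i |Y - f_i(X)|^2 \mid X] = \int \min_i |y - f_i(X)|^2\, Q(X,dy)$
is nothing more than the defining property of the regular conditional distribution $Q$ applied to the (jointly measurable, nonnegative) integrand $(x,y)\mapsto \min_i|y-f_i(x)|^2$. Then, for $x$ outside a $\Loi(X)$-null set, Proposition~\ref{simple1} applied to $Q(x,\cdot)$ with quantizer $f(x)$ yields
$$\int \min_i |y-f_i(x)|^2\, Q(x,dy) \;=\; \min_{\nu \in \mathcal{P}(f_1(x),\dots,f_n(x))} \W_2\bigpar{Q(x,\cdot),\nu}^2.$$
The next step is to identify $\widehat{Q}_f(x,\cdot)$, which by its very definition is the push-forward of $Q(x,\cdot)$ under $\pi_{f(x)}$, so Proposition~\ref{simple1} also shows that the above minimum is attained at $\widehat{Q}_f(x,\cdot)$ and equals $\W_2(Q(x,\cdot),\widehat{Q}_f(x,\cdot))^2$. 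Substituting $X$ for $x$ gives the claimed chain of equalities almost surely. Part~2 then follows from the tower property: taking expectation of the equality $\mathbb{E}[\min_i|Y - f_i(X)|^2 \mid X] = \W_2(Q(X,\cdot),\widehat{Q}_f(X,\cdot))^2$ yields $\Delta_n(f) = \mathbb{E}[\W_2(Q(X,\cdot),\widehat{Q}_f(X,\cdot))^2]$.

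The substantive Wasserstein-geometry content is already packaged in Proposition~\ref{simple1}, so the remaining work is essentially measurability bookkeeping, which I expect to be the main obstacle. One must fix a measurable tie-breaking convention so that $(x,y)\mapsto \pi_{f(x)}(y)$ is jointly measurable (for example the lowest-index nearest neighbor), invoke the existence of a regular conditional distribution $Q$ on $\R^d$, and verify that $x \mapsto \W_2(Q(x,\cdot),\widehat{Q}_f(x,\cdot))^2$ is measurable so that the tower property applies. The last point is automatic because we have just identified this quantity with the measurable integral $\int\min_i|y-f_i(x)|^2\,Q(x,dy)$. These checks are standard but warrant explicit mention.
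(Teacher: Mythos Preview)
Your proposal is correct and follows essentially the same approach as the paper: apply Proposition~\ref{simple1} pointwise to $Q(x,\cdot)$ with quantizer $f(x)=(f_1(x),\dots,f_n(x))$ on the full-measure set where $\int|y|^2\,Q(x,dy)<\infty$, and then take expectations for part~2 via the tower property. The paper's proof is terser and omits the measurability discussion you flag, so your proposal is, if anything, more complete.
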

\begin{proof}
    \begin{enumerate}
        \item The first equality follows from the definition of $Q$.
              Applying Proposition \ref{simple1} to a random vector with distribution $Q(x,\cdot)$, where $x\in E$ is such that
              $\int \abs{y}^2 Q(x,dy)<\infty$,
              yields
              \begin{align*}
                   & \int_{\mathbb{R}^d} \min_{1 \leq i \leq n} \bigl|y - f_i(x)\bigr|^2 Q(x, dy)                     \\
                   & = \mathcal{W}_2\bigl(Q(x, \cdot), \widehat{Q}_f(x, \cdot)\bigr)^2                                \\
                   & = \min_{\nu \in \mathcal{P}(f_1(x), \dots, f_n(x))} \mathcal{W}_2\bigl(Q(x, \cdot), \nu\bigr)^2.
              \end{align*}
              Since $\int \abs{y}^2 Q(X,dy) <\infty$ with probability $1$, one gets the second and the third equalities.
        \item It's a consequence of the first point and the fact that
              $$\Delta_n(f)=\E\Bigpar{ \mathbb{E}\Bigl[\min_{1 \leq i \leq n} \bigl|Y - f_i(X)\bigr|^2 \mid X\Bigr] }$$
    \end{enumerate}
\end{proof}
Thus, minimizing \(\Delta_n(f)\) is equivalent to reducing the expected squared Wasserstein distance between the true conditional law \(Q(X, \cdot)\) and its quantized approximation \(\widehat{Q}_f(X, \cdot)\).

\textbf{Optimal conditional quantizer}
Here we show there always exists an optimal quantizer for $\Delta_n$ and we establish how good is $\widehat Q_{f^*}(X, \cdot)$ as an approximation of $Q(X, \cdot)$ in  Wasserstein distance.
\begin{theorem}
    There exists a quantizer $f^*:E \to (\R^d)^n$ such that
    $$ \Delta_n(f^*)=\inf_f \Delta_n(f). $$
    Moreover, for any optimal quantizer $f^*$, with probability $1$, one has
    \[
        \mathcal{W}_2\bigl(Q(X, \cdot), \widehat Q_{f^*}(X, \cdot)\bigr) = \min_{\nu \in \mathcal{P}_n(\mathbb{R}^d)} \mathcal{W}_2\bigl(Q(X, \cdot), \nu\bigr),
    \]
    where \(\mathcal{P}_n(\mathbb{R}^d)\) denotes the set of distributions on \(\mathbb{R}^d\) with support of cardinality at most \(n\).
\end{theorem}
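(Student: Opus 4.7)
The plan is to reduce both claims to Proposition~\ref{simple2} applied conditionally on $X$, the only nontrivial step being a measurable selection. Define, for each $x \in E$,
$$D_n^{(x)}(\alpha) := \int_{\R^d} \min_{1\le i \le n} \bigabs{y - \alpha_i}^2 \, Q(x, dy),$$
so that $\Delta_n(f) = \E\bigcro{D_n^{(X)}(f(X))}$, and set $v(x) := \inf_{\alpha \in (\R^d)^n} D_n^{(x)}(\alpha)$. The integrability assumption guarantees that $\int \abs{y}^2 \, Q(x,dy) < \infty$ on a measurable set $E_0$ with $\P(X \in E_0) = 1$. For every $x \in E_0$, applying Proposition~\ref{simple2} to the distribution $Q(x,\cdot)$ yields the existence of a minimizer $\alpha^*(x)$ of $D_n^{(x)}$ together with the identity $v(x) = \min_{\nu \in \mathcal{P}_n(\R^d)} \W_2\bigpar{Q(x,\cdot), \nu}^2$.

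The heart of the argument is a measurable selection step. The map $(x,\alpha) \mapsto D_n^{(x)}(\alpha)$ is jointly measurable (the integrand $(y,\alpha) \mapsto \min_i \abs{y-\alpha_i}^2$ is continuous and $Q$ is a regular probability kernel since $Y$ takes values in $\R^d$) and continuous in $\alpha$ for every $x \in E_0$. In particular $v(x) = \inf_{\alpha \in \mathbb{Q}^{dn}} D_n^{(x)}(\alpha)$ is measurable. The set-valued map $\Psi(x) := \bigacc{\alpha \in (\R^d)^n : D_n^{(x)}(\alpha) = v(x)}$ has closed nonempty values on $E_0$ and a measurable graph, so the Kuratowski--Ryll-Nardzewski measurable selection theorem supplies a measurable $f^*: E \to (\R^d)^n$ with $D_n^{(x)}(f^*(x)) = v(x)$ for $x \in E_0$, extended arbitrarily outside $E_0$.

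To conclude, for any measurable $f$ one has $D_n^{(X)}(f(X)) \ge v(X)$ pointwise, so $\Delta_n(f) \ge \E\bigcro{v(X)} = \Delta_n(f^*)$, which proves that $f^*$ is optimal. Conversely, if $\tilde f$ is any optimal quantizer, then $\E\bigcro{D_n^{(X)}(\tilde f(X)) - v(X)} = 0$ with a nonnegative integrand, hence $D_n^{(X)}(\tilde f(X)) = v(X)$ almost surely; combined with the conclusion of Proposition~\ref{simple2} applied to $Q(X,\cdot)$, this delivers the stated Wasserstein identity with probability $1$. The main obstacle is the measurable selection itself: joint measurability of $D_n^{(\cdot)}(\cdot)$ relies on a regular version of $Q$, and verifying the hypotheses of Kuratowski--Ryll-Nardzewski (closedness of the values of $\Psi$ and measurability of its graph) is the only technical point that goes beyond the unconditional theory recalled in Section~3.
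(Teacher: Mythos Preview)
Your proof is correct and follows essentially the same route as the paper: define $f^*(x)$ as an optimal quantizer of $Q(x,\cdot)$, deduce $\Delta_n(f^*)\le\Delta_n(f)$ by conditioning, and for any optimal $f$ conclude from the vanishing of $\E[D_n^{(X)}(f(X))-v(X)]$ that $f(X)$ is pointwise optimal a.s., then invoke Proposition~\ref{simple2}. The one substantive addition is your measurable-selection step via Kuratowski--Ryll-Nardzewski, which the paper's proof simply omits (it just ``chooses'' $f^*(x)$ pointwise without addressing measurability); your version is therefore more rigorous on exactly the point you flag as the main technical obstacle.
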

The proof is presented in the Appendix~\ref{appendix:proofs}.

Conditional quantization extends the concept of vector quantization to approximate the conditional distribution of \(Y\) given \(X\). By minimizing \(\Delta_n(f)\), one obtains a quantized approximation \(\hat{Q}_f(X, \cdot)\) that is close to \(Q(X, \cdot)\) in the expected Wasserstein sense.
This process involves not only learning the quantizer but also estimating the weights
$$ x\mapsto Q\bigl(x, \{y \in \mathbb{R}^d : \pi_{f(x)}(y) = f_i(x)\}\bigr) $$
associated with it to accurately represent the conditional distribution.
In the next subsection, we present how to learn such quantizers, including the estimation of their associated weights.

\begin{algorithm}[t]
    \caption{One Batch Iteration of CCLVQ}
    \label{alg:distortion_with_classifier}
    \begin{algorithmic}[1]
        \REQUIRE
        \STATE $\mathcal{B} = \{x_j, y_j\}_{j=1}^N$: batch of samples
        \STATE $n$: number of parametric functions (experts)x
        \STATE $\{f_i(x; \theta_i)\}_{i=1}^n$: functions with parameters $\theta_i$
        \STATE $\ell$ : loss function from $\R^d \times \R^d \to \R$
        \STATE $h(x; \phi)$: classifier with parameters $\phi$
        \STATE $\gamma_{\mathrm{exp}}, \gamma_{\mathrm{cls}}$: learning rates for experts and classifier
        % \STATE $T$: number of epochs

        \STATE \textbf{Initialize} $\theta_1, \ldots, \theta_n$ and $\phi$
        % \FOR{$t = 1$ to $T$}
        \FOR{$j=1$ to $N$}
        % \STATE Generate distorted data $y_j \gets g(x_j)$ using transformation $g$
        \STATE $i^*(j) \gets \arg\min_{i \in \{1, \dots, n\}} \ell\bigpar{y_j, f_i(x_j; \theta_i)}$
        \COMMENT{assign sample to its closest function (expert)}
        \ENDFOR

        \FOR{$i = 1$ to $n$}
        \STATE $\displaystyle \theta_i \gets \theta_i - \gamma_{\mathrm{exp}} \,\nabla_{\theta_i}\!\Bigl(\sum_{j: i^*(j) = i} \ell(y_j, f_i(x_j; \theta_i)) \Bigr)$
        \COMMENT{update only function $i$ on samples assigned to $i$}
        \ENDFOR

        \STATE \COMMENT{train the classifier on the expert assignments}
        \STATE $\displaystyle \ell_{\text{classifier}} \gets -\frac{1}{N} \sum_{j=1}^N \sum_{i=1}^n \mathbbm{1}_{[i^*(j) = i]} \log h_i(x_j; \phi)$
        \COMMENT{cross-entropy loss with expert labels}
        \STATE $\displaystyle \phi \gets \phi - \gamma_{\mathrm{cls}} \nabla_\phi \mathcal{L}_{\text{classifier}}$
        % \ENDFOR
        \STATE \textbf{Return} $\{\theta_i\}_{i=1}^n$, $\phi$
    \end{algorithmic}
\end{algorithm}
\subsection{Conditional Competitive Learning Vector Quantisation (CCLVQ)}
We introduce the Conditional Competitive Learning Vector Quantisation (CCLVQ) algorithm to minimize the distortion function \(\Delta_{n}(f)\), where the \(n\) functions \(f = (f_1, \dots, f_n)\) are parameterized by \(\theta  =(\theta_1, \dots, \theta_n)\), respectively.

CCLVQ is similar to CLVQ and, thus is closely linked to gradient descent. One iteration associated with one data point $(X, Y)$ is written as
\begin{align*}
    \theta \gets \theta -\gamma\Bigpar{1_{\{I_{f(X)}(Y)= i\}} \nabla_{\theta_i} \bigl| f_i(X, \theta_i) - Y  \bigr|^2}_{1\le i\le n}
\end{align*}
We generalize it to any loss function $\ell:\R^d\times\R^d\to\R$:
\begin{align*}
    \theta \gets \theta -\gamma\Bigpar{1_{\{I_{f(X)}(Y)= i\}} \nabla_{\theta_i}\ell(Y, f_i(X, \theta_i))  }_{1\le i\le n} \ .
\end{align*}
where, for $\alpha\in(\R^d)^n$ and $y$ in $\R^d$, $I_\alpha(y)\in\{1,\dots,n\}$ satisfies
$$\ell(y,\alpha_{I_\alpha(y)} ) = \min_{1\le i\le n}\ell(y,\alpha_i).$$
The algorithm selectively updates only the parameters of the function that minimizes the distortion for a given sample, iteratively refining the functions to better represent the conditional distribution \(\mathcal{L}(Y \mid X)\). In this context, the functions \(f_i\) are referred to as experts.
Each iteration alternates between two key steps:
(1) assigning each sample to its closest expert \(f_i\) based on the current parameters \(\theta_i\), and
(2) updating the parameters of the assigned expert using gradient descent. In parallel, a classifier \(h(x; \phi)\) is trained to predict the probabilities associated with each expert. These probabilities, or weights, are crucial for approximating the conditional distribution \(\mathcal{L}(Y \mid X)\), as they capture the relative contribution of each expert while also providing a measure of uncertainty, which can be crucial in reconstruction settings such as medical imaging.

\textbf{Introducing new experts}
We employ the splitting strategy~\citep{linde1980vector} to introduce a new expert during training. This approach involves duplicating an existing expert, typically the one with the highest contribution to the overall distortion \(\Delta_n(f)\) or the largest number of assigned samples. Adding a new expert through splitting is performed during the final training epoch to minimize computational overhead.

\begin{figure}
    \centering
    \includegraphics[width=1\linewidth]{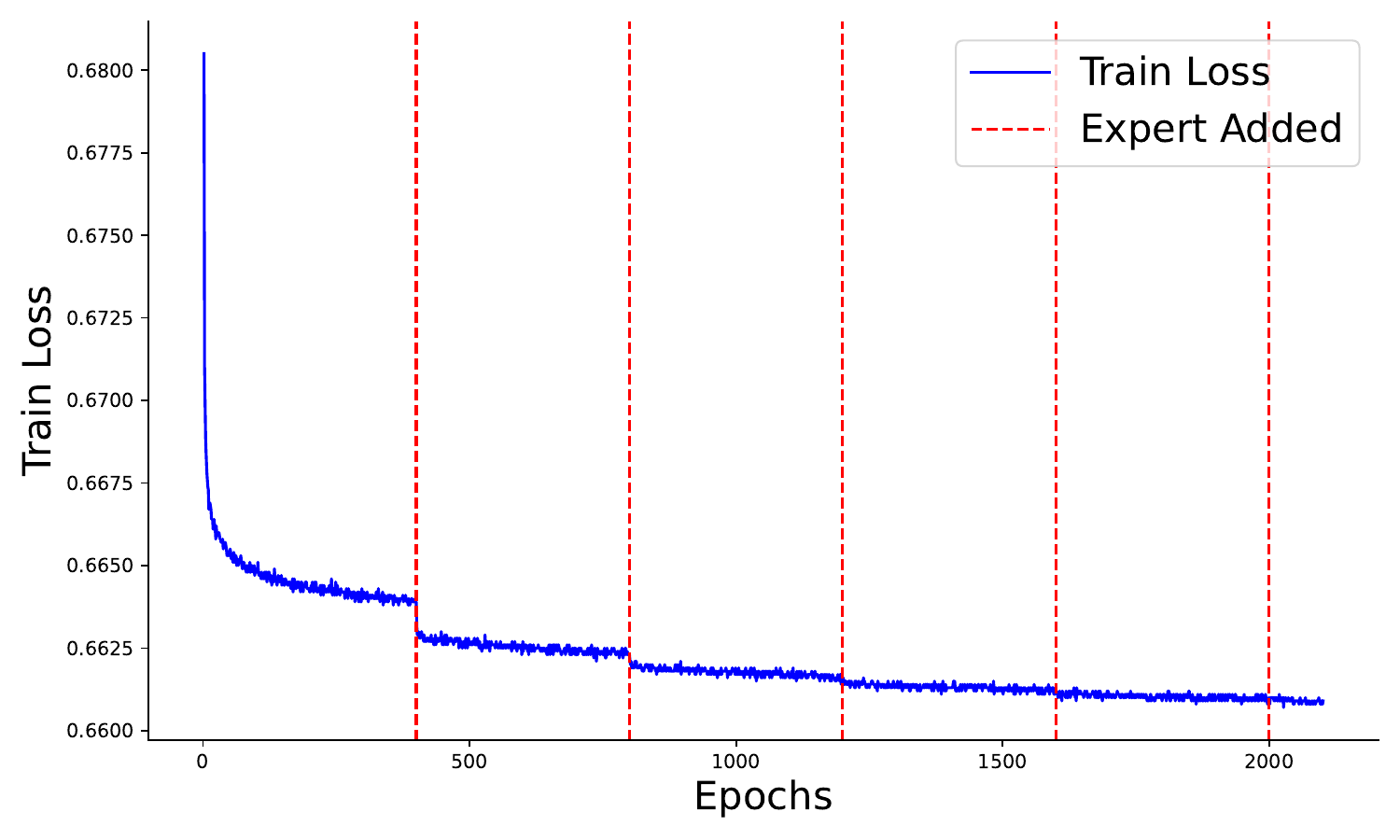}
    \caption{CCLVQ training loss on the MNIST dataset for an inpainting task, showing the effect of adding new experts using the splitting strategy every 400 epochs.
    }
    \label{fig:training_loss}
\end{figure}

\textbf{Choice of $\mathbf{n}$}
To illustrate the impact on the distortion functional \(\Delta_n(f)\) of adding a new expert, we experiment with an inpainting task with a random window on MNIST, see Appendix~\ref{appendix:traning_detail_mnist} for more training details. As shown in Figure~\ref{fig:training_loss}, the training loss decreases each time a new expert is added.
To isolate the impact of adding an expert, we deliberately allowed the model to train for many epochs before introducing a new expert, ensuring convergence before the addition.
The choice of the number of experts \( n \) is empirical and depends on the task. In this case, we observe a significant improvement in performance for \( n = 3 \), beyond which the improvements become less pronounced.
A practical approach to reducing computational overhead on large models and datasets is, to begin with, a pretrained model, performing splitting from this pretrained state, and then fine-tuning the resulting experts for a few additional epochs.

\begin{figure}[h]
    \centering
    \includegraphics[width=\linewidth]{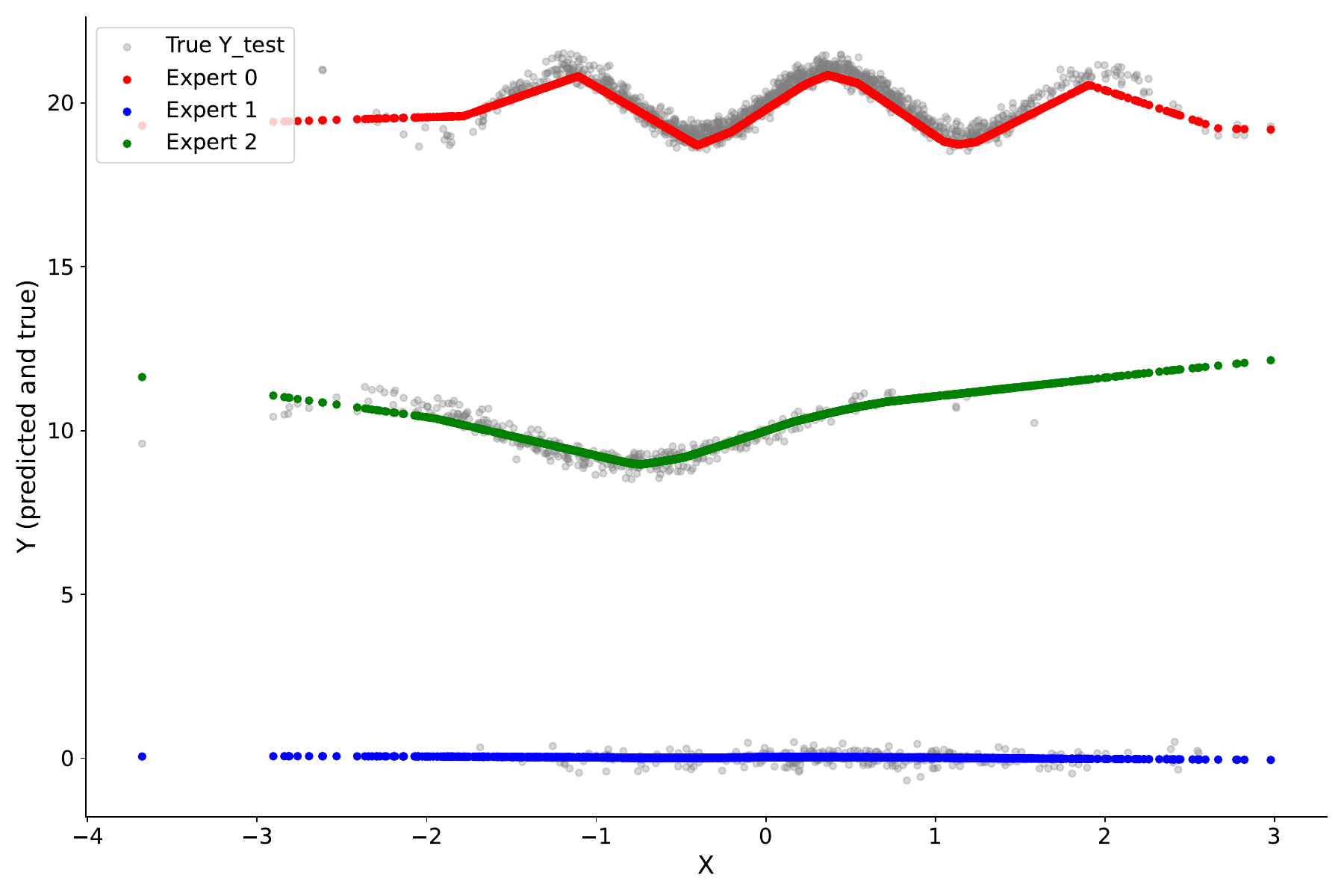}
    \caption{Visualization of synthetic data showing true data points $Y$ (in grey) and predicted values from every expert (in colors). Each expert captures a specific mode of conditional distribution.}
    \label{fig:synthetic-multimodal}
\end{figure}
% \textcolor{blue}{swap X and Y}
\section{Examples of applications}
\subsection{Multi-value Regression on synthetic data}
We aim to demonstrate CCLVQ's applications through multi-value regression, multi-modal reconstruction, and generation experiments. Upon acceptance, the code will be made publicly available. We illustrate Algorithm~\ref{alg:distortion_with_classifier} in a multimodal regression problem. Each expert is a small MLP with one hidden layer of dimension $20$ trained with gradient descent following CCLVQ.
The data generation process is defined as follows:
\begin{align*}
     & X \sim \mathcal{N}(0, 1), \quad
    p_i(x) = \frac{\exp(a_i x + b_i)}{\sum_{j=1}^{\mathrm{mode}} \exp(a_j x + b_j)},            \\
     & \text{knowing $X$, $I \sim \mathrm{Cat}\bigl(p_1(X), \dots, p_{\mathrm{mode}}(X)\bigr)$} \\
     & \quad \text{and $\varepsilon \sim \mathcal{N}(0, \sigma^2)$ are independent,}            \\
     & Y = \sin\bigl(2I X\bigr) + 10I + \varepsilon.
\end{align*}
Figure~\ref{fig:synthetic-multimodal} shows that each expert learns one mode of the conditional distribution \(\mathcal{L}(Y \mid X)\). In the figure, there are $3$ experts and $3$ modes.
We can also train a classifier $h$ to learn the vector $p(X)$ to have the probability associated with each expert, as depicted in Figure~\ref{fig:real_proba_vs_predicted}. Experts’ weights better approximate mode probabilities in regions with higher sample density.
\begin{figure}
    \centering
    \includegraphics[width=1\linewidth]{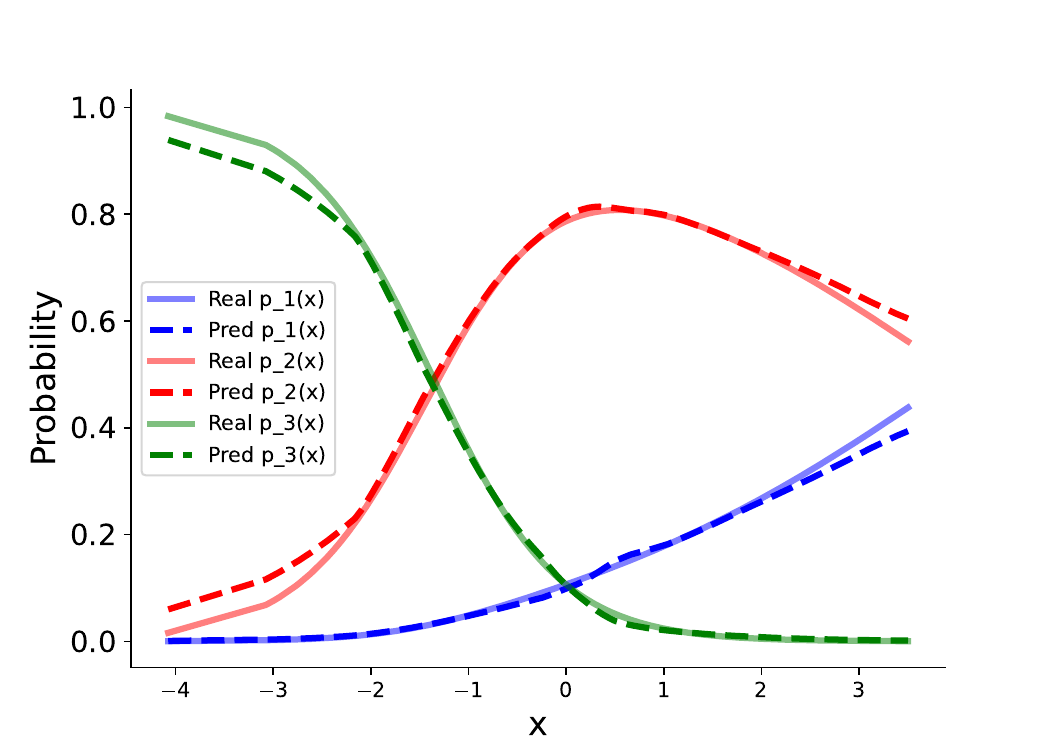}
    \caption{Weights estimations of experts versus probabilities of mode w.r.t $X$.}
    \label{fig:real_proba_vs_predicted}
\end{figure}
\begin{figure}[t]
    \centering
    \includegraphics[width=\linewidth,trim={0 1 0 0},clip]{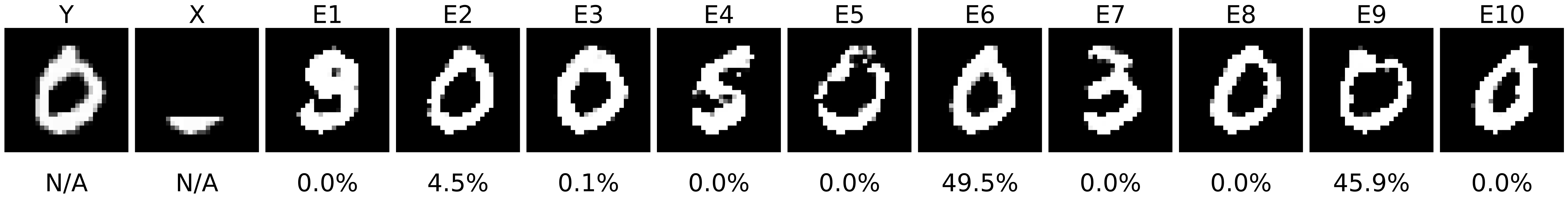}
    \includegraphics[width=\linewidth,trim={0 1 0cm 3},clip]{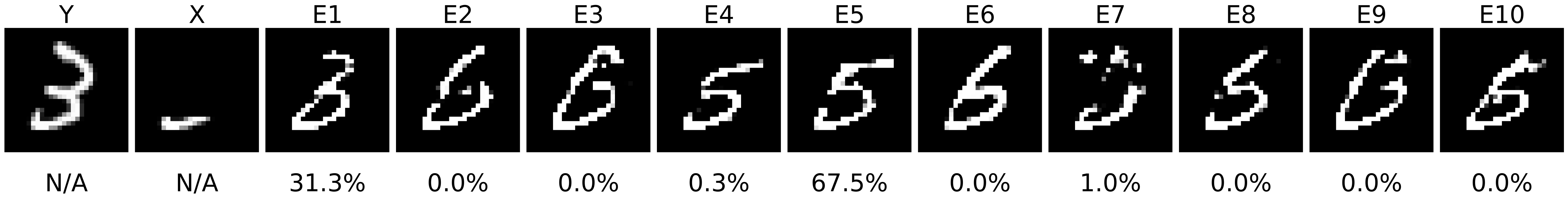}
    \includegraphics[width=\linewidth,trim={0 1 0cm 3},clip]{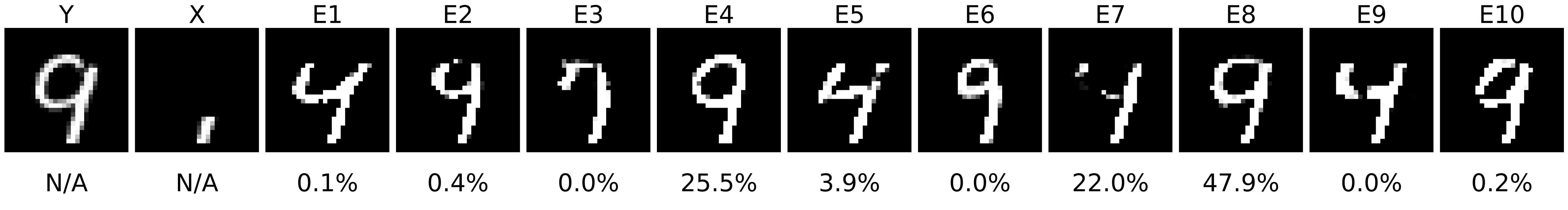}
    \caption{Multi-modal inpainting results on MNIST. Each row shows the ground truth $Y$, masked input $X$, and reconstructions from different experts, followed by the classifier's weights as a measure of uncertainty.}
    \label{fig:multi-modal-inpainting}
\end{figure}
\label{experiment:inpainting_mnist}
\begin{figure}[t]
    \centering
    \includegraphics[width=\linewidth,trim={0 1 0 0},clip]{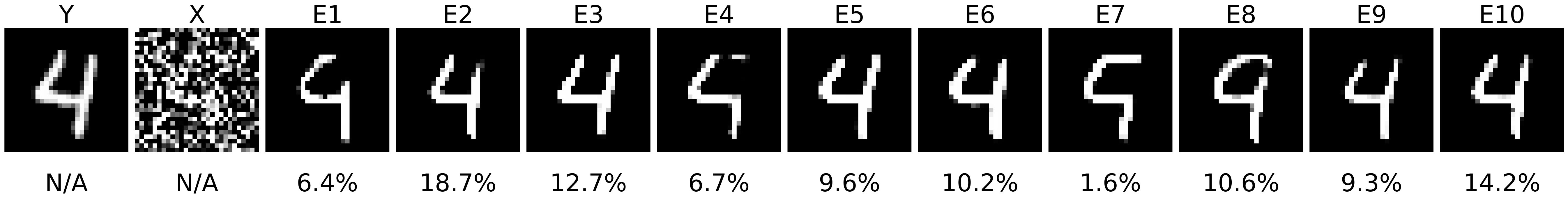}
    \includegraphics[width=\linewidth,trim={0 1 0cm 8},clip]{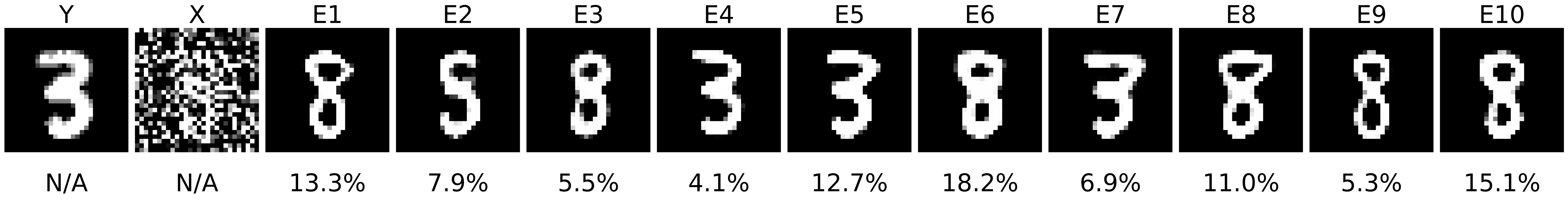}
    \includegraphics[width=\linewidth,trim={0 1 0cm 8},clip]{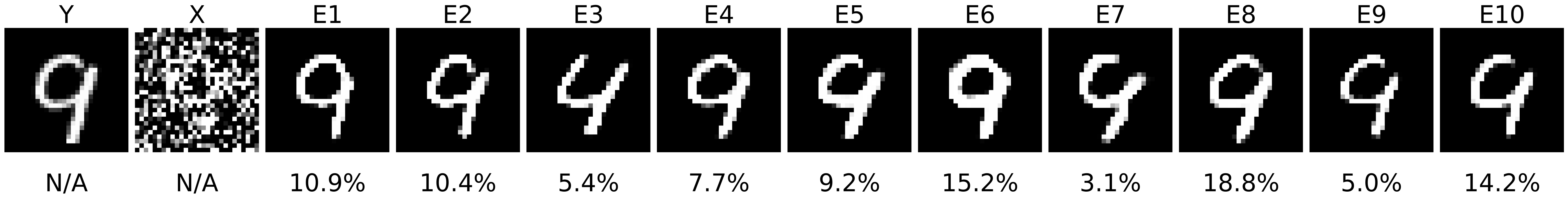}
    \caption{Multi-modal denoising results on MNIST. Each row shows the ground truth $Y$, noised input $X$, and reconstructions from different experts, followed by the classifier's weights as a measure of uncertainty.}
    \label{fig:multi-modal-denoising}
\end{figure}

\subsection{Multi-modal Reconstruction with Uncertainty Quantification}
\paragraph{MNIST Dataset}
\label{experiment:inpainting_mnist}
In this experiment, we perform inpainting on MNIST digits, where the upper part of each image is masked, as in~\cite{nehme2023uncertainty}. The goal is to reconstruct the masked region while considering the inherent ambiguity and multimodality of possible completions, we use an encoder-decoder architecture similar to the one from~\citet{pathak2016context} for each expert and a ResNet9 for the classifier $h$.
Figure~\ref{fig:multi-modal-inpainting} illustrates the inpainting process on a test set. Each column corresponds to a reconstruction generated by a different expert model $f_i$. The first column $X$ represents the base image, serving as the ground-truth target for reconstruction. The second column $Y$ displays the masked image, which is the input provided to the models. Subsequent columns depict reconstructions produced by the various expert models $f_i$, capturing different plausible completions of the masked region.

The weights of each expert learned by the classifier $h$ during training are visualized in the last row. These weights represent the probability distribution over experts for a given reconstruction and provide a measure of uncertainty by indicating the most probable reconstruction. 
% Note that the work of~\cite{nehme2023uncertainty, manor2024posterior} could not quantify uncertainty in such a manner.
In the same manner, we also illustrate reconstruction for denoising tasks, see Figure~\ref{fig:multi-modal-denoising}.
Reconstruction for a unique model trained is provided for the comparison of the same samples with Figure~\ref{fig:one_expert_inpainting} and Figure~\ref{fig:one_expert_denoising} in the Appendix.

\textbf{CelebA-HQ Dataset}
\begin{figure}[t]
    \centering
    \includegraphics[width=1\linewidth]{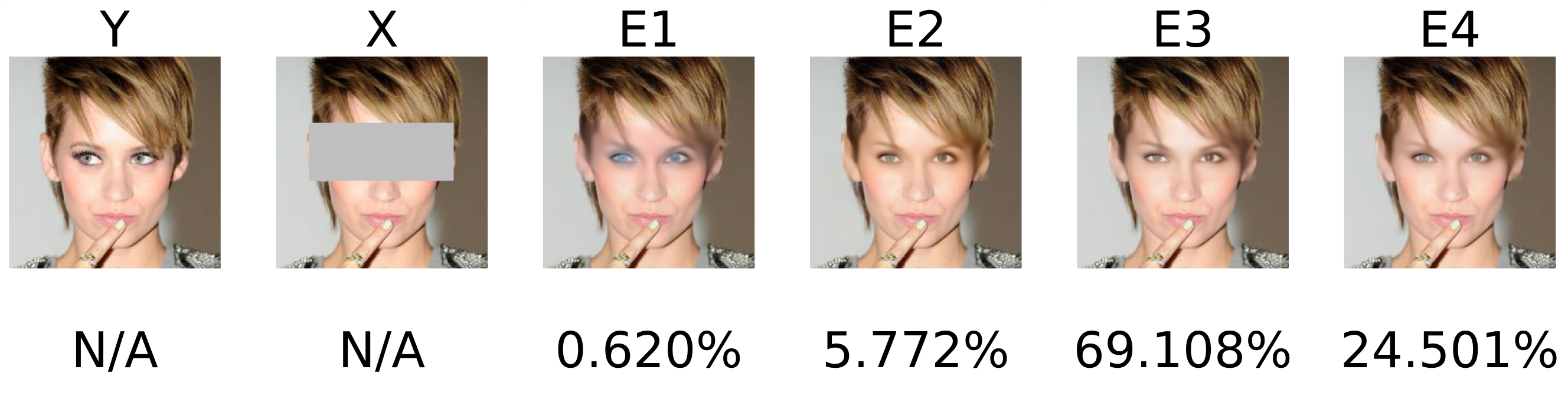}
    \centering
    \includegraphics[width=1\linewidth]{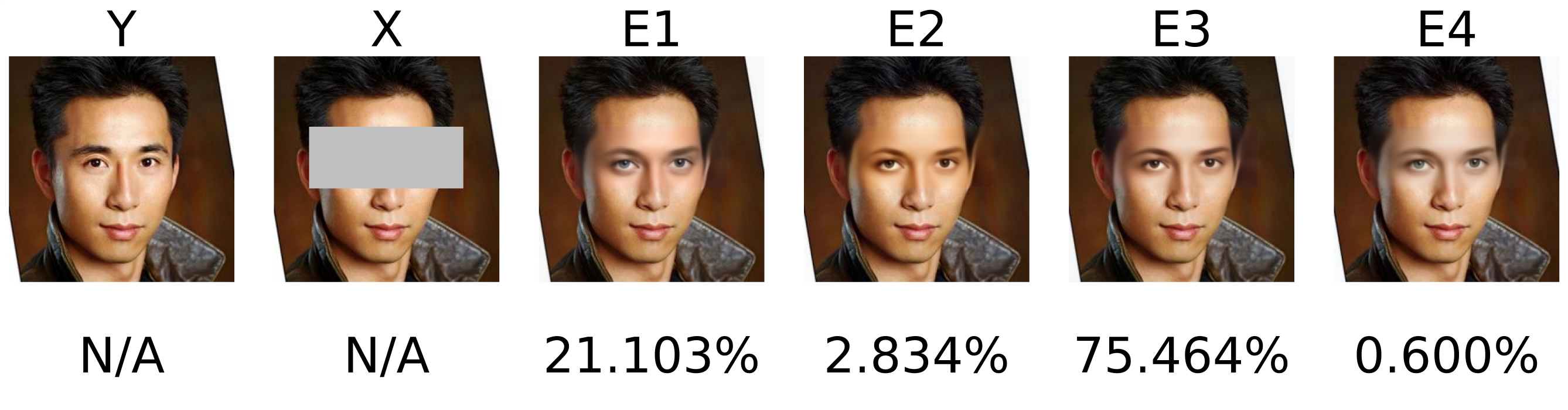}
    \centering
    \includegraphics[width=1\linewidth]{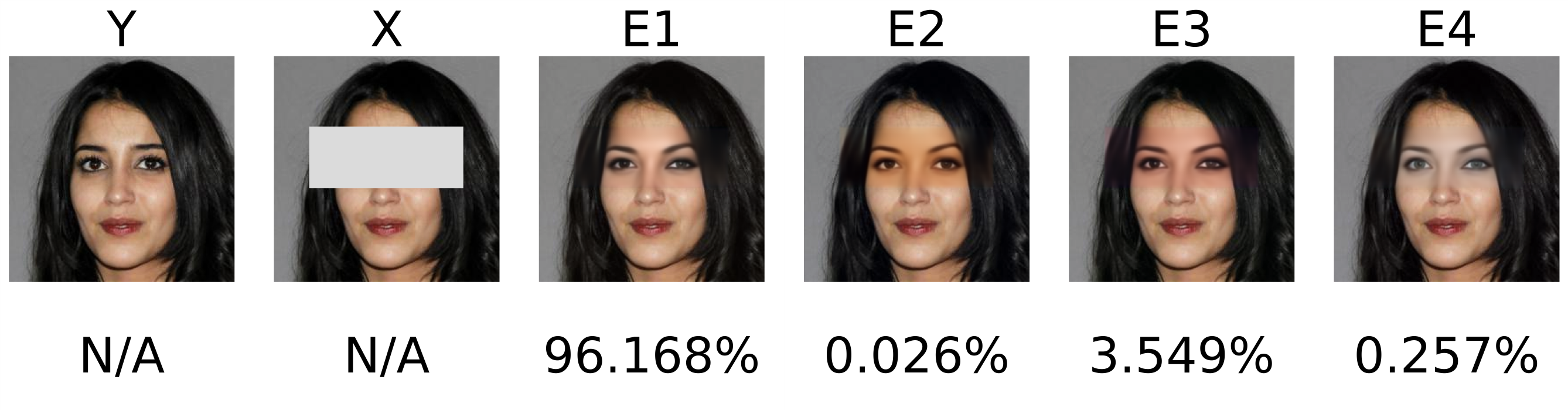}
    \caption{Diverse inpainting results on the CelebA-HQ dataset using CCLVQ. Each column shows reconstructions from different expert models after masking the area near the eyes, highlighting variations in eye color, eyebrow shape, and other fine-grained features. Probability for each construction are placed below image.
    }
    \label{fig:celb_inpainting}
\end{figure}
We conducted an inpainting experiment on the CelebA-HQ dataset, masking the area near the eyes on the test set, following the protocol of~\cite{nehme2023uncertainty}. The same ResUNet architecture~\cite{ronneberger2015unet, falk2019unet} was used for this task.
The baseline model was trained for 30 epochs with a batch size of $16$. Subsequently, we split the model into four experts and applied CCLVQ for only one additional epoch, significantly reducing computational overhead.
This approach led to diverse reconstructions, see Figure~\ref{fig:celb_inpainting} and more reconstructions in the Appendix see %Figure~\ref{fig:more_faces} and 
Figure~\ref{fig:more_celeb_inpainting}, demonstrates variations in eye color, eyebrow shape, and other fine-grained features, illustrating the method's ability to capture multimodal distributions effectively.

\begin{figure*}[t!]
    \subfigure[Dataset]{\includegraphics[width=0.16\linewidth]{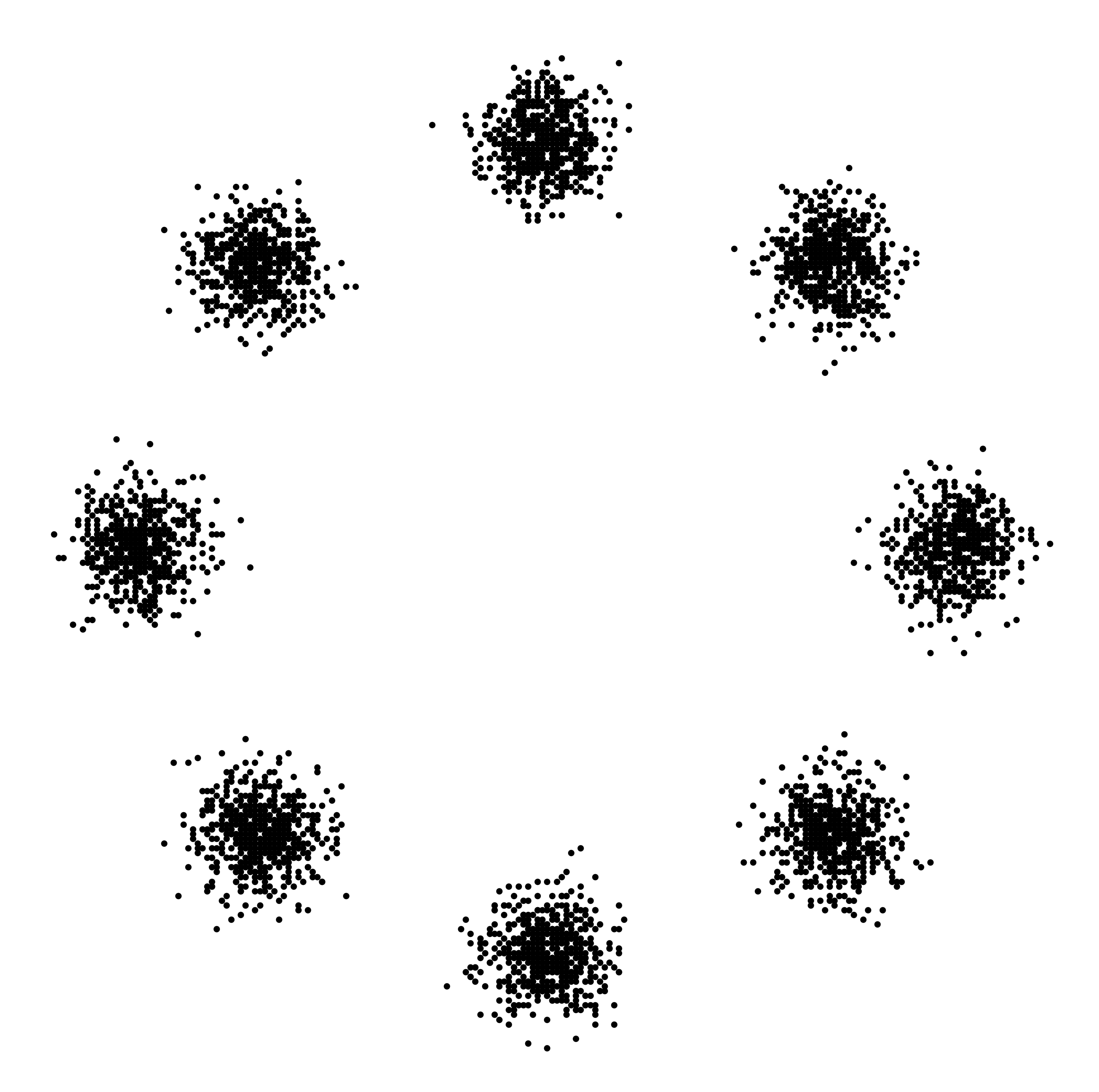}\label{fig:nfdata}} \vline
    \subfigure[$n=1$]{\includegraphics[width=0.16\linewidth]{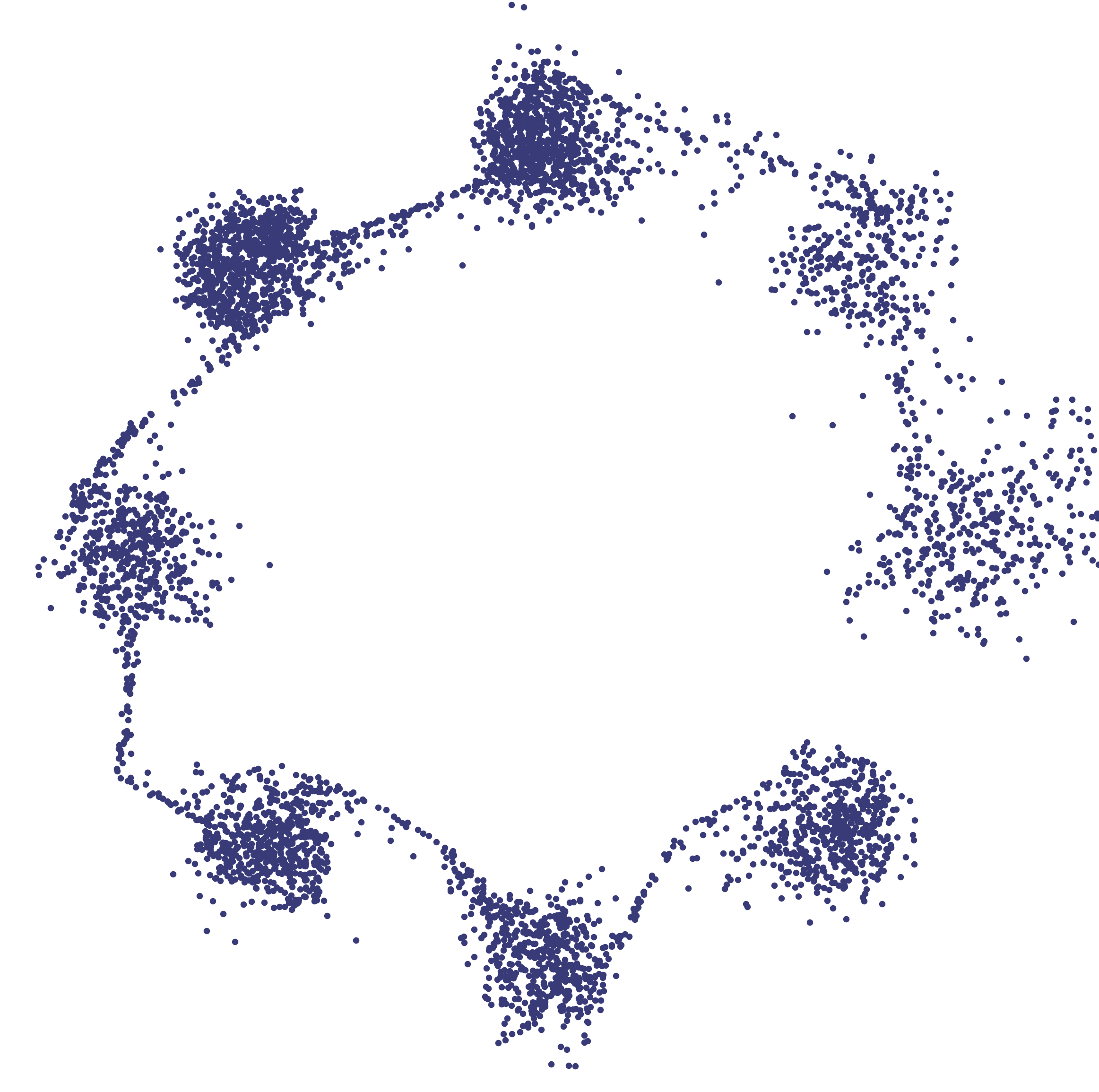}\label{fig:nf1}}
    \subfigure[$n=2$]{\includegraphics[width=0.16\linewidth]{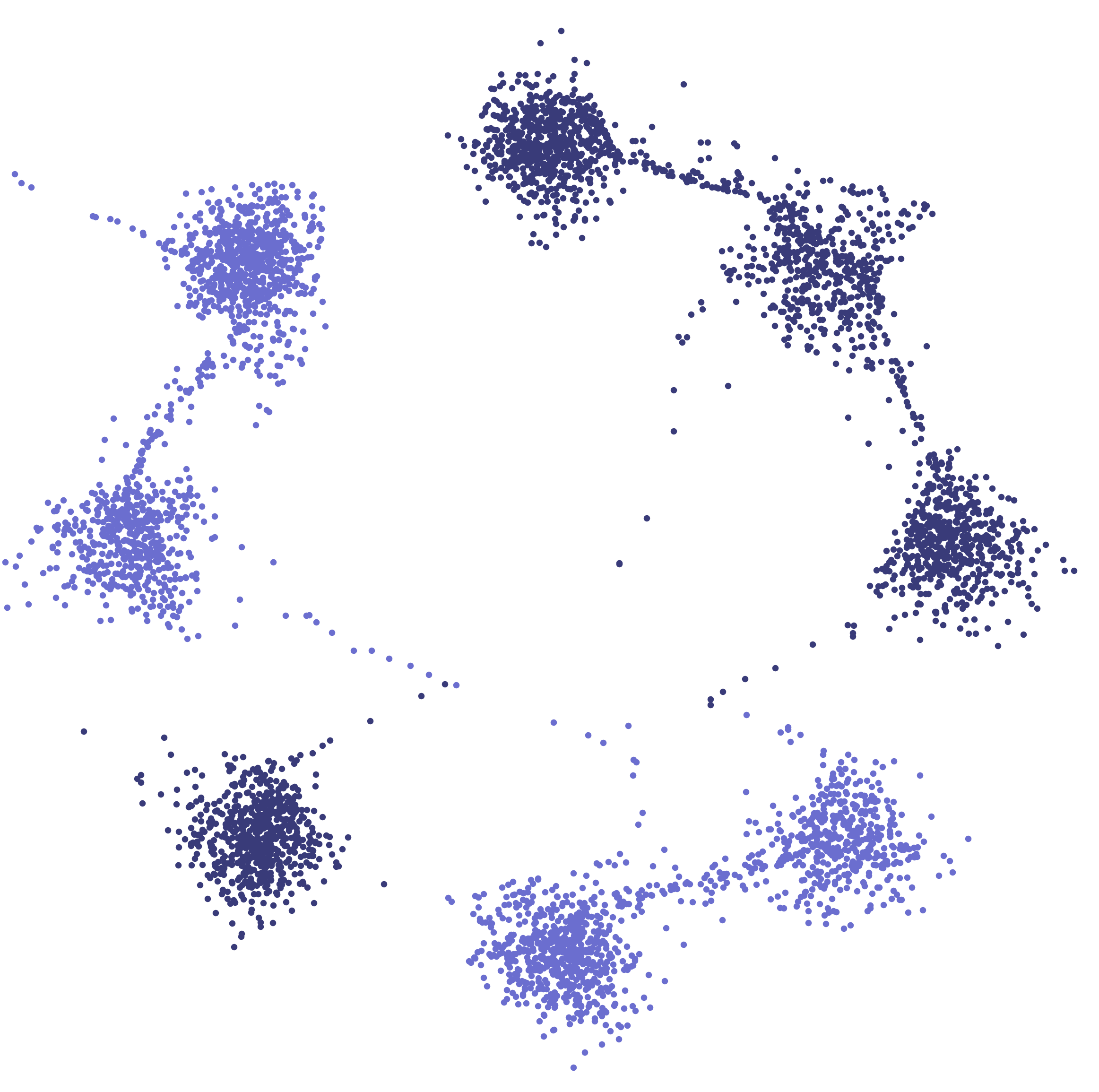}}
    \subfigure[$n=4$]{\includegraphics[width=0.16\linewidth]{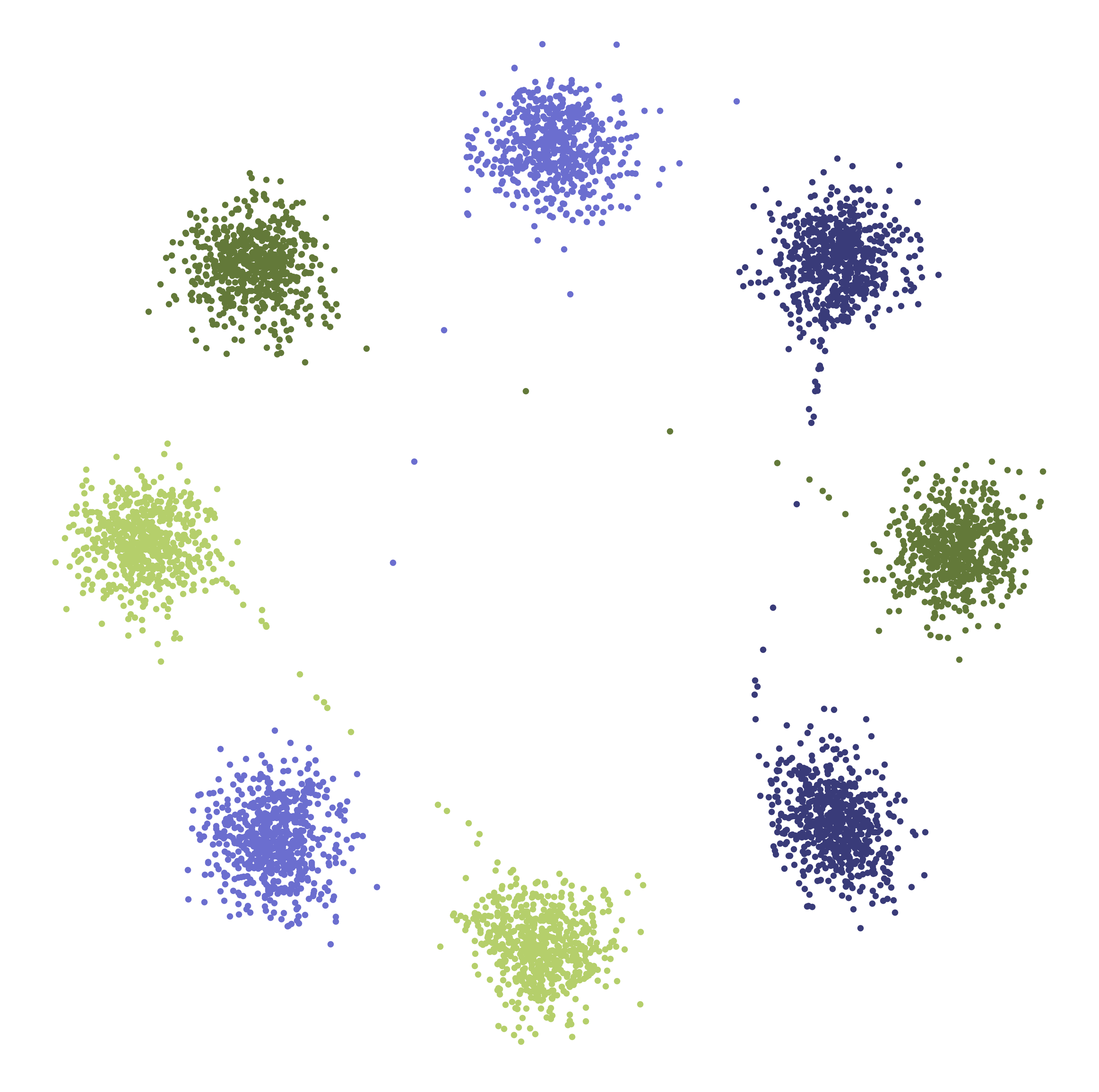}}
    \subfigure[$n=8$]{\includegraphics[width=0.16\linewidth]{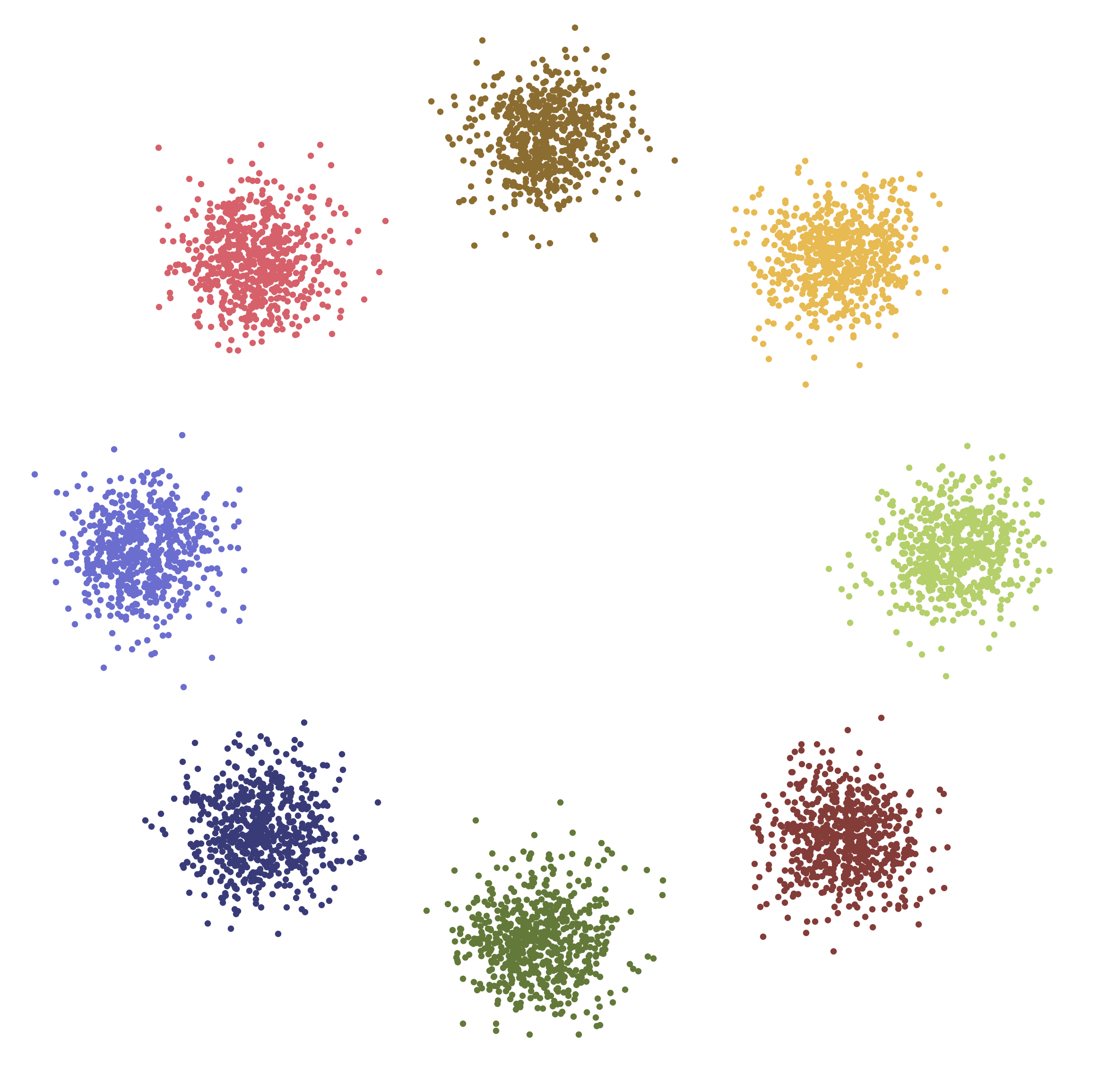}}
    \subfigure[$n=16$]{\includegraphics[width=0.16\linewidth]{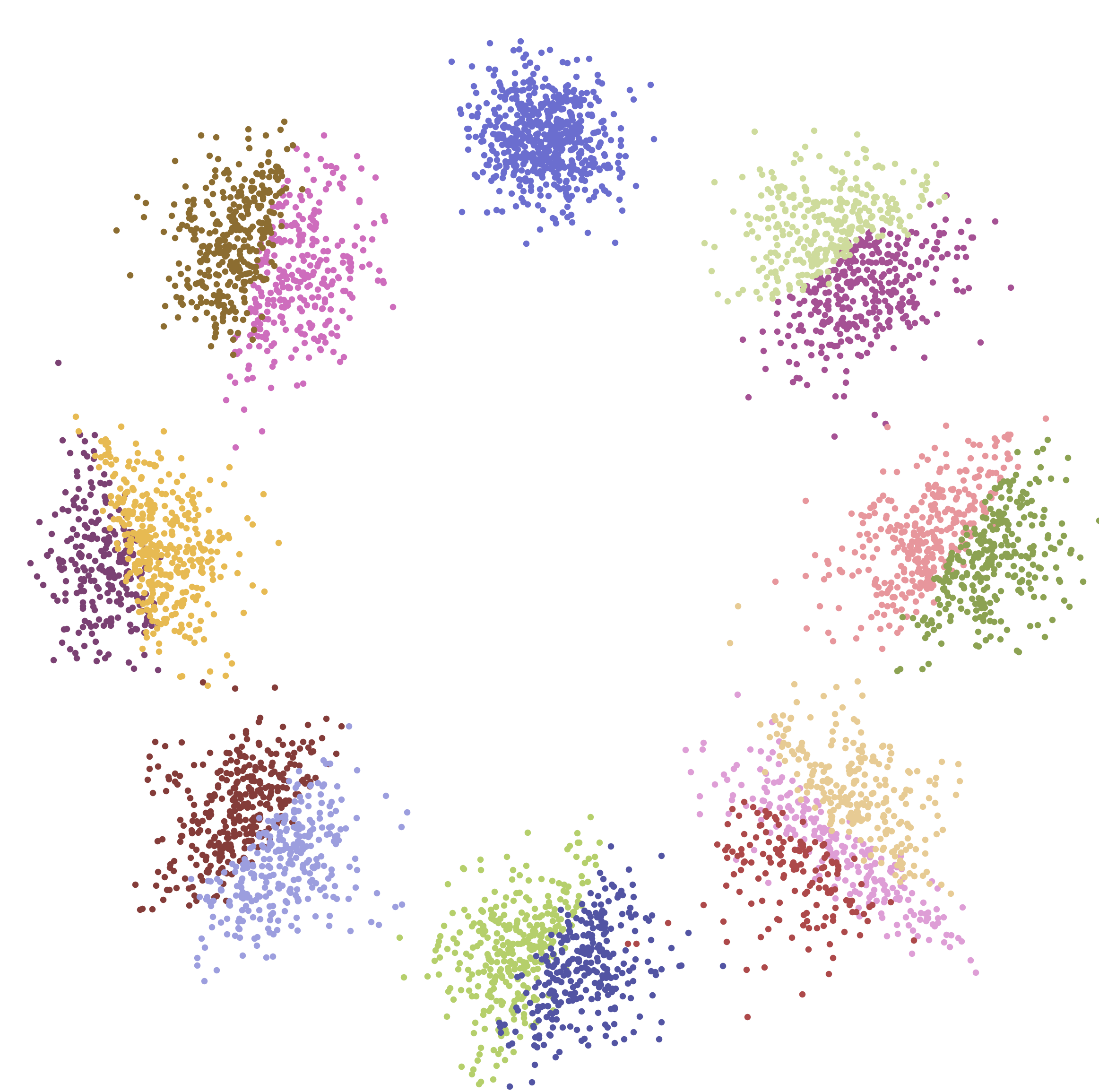}\label{fig:nf16}}
    \caption{Samples generated by RealNVP trained on a mixture of 2D Gaussians. As the number of experts increases, the model generates fewer out-of-distribution samples. Each expert is represented by a unique color.}
\end{figure*}
\begin{table*}[t!]
    \begin{center}
        \small
        \begin{tabular*}{\linewidth}{@{\extracolsep{\fill}}c|cccc|cc}
            \hline
            Metrics &\multicolumn{4}{c|}{\textbf{CIFAR-10}} & \multicolumn{2}{c}{\textbf{CelebA}} \\
            $n$ & $1$ & $2$ & $4$ & $2^*$ & $1$ & $2$   \\
            \hline
            FID $(\downarrow)$ &  $8.21 \pm 0.72$ & $\mathbf{7.34 \pm 0.03}$ & $7.74 \pm 0.15$ & $8.31 \pm 0.14$ & $5.19 \pm 0.09$ & $\mathbf{4.75 \pm 0.00}$\\
            Precision $(\uparrow)$ & $\mathbf{0.85 \pm 0.00}$ & $\mathbf{0.85 \pm 0.00}$& $0.84 \pm 0.00$ & $\mathbf{0.85 \pm 0.00}$ & $\mathbf{0.78 \pm 0.01}$ &$\mathbf{0.78 \pm 0.00}$ \\
            Recall $(\uparrow)$ & $0.69 \pm 0.02$ & $\mathbf{0.72 \pm 0.01}$ & $\mathbf{0.72 \pm 0.01}$ & $0.69 \pm 0.00$ & $0.62 \pm 0.01$ & $\mathbf{0.63 \pm 0.00}$ \\
            Entropy $(\uparrow)$ & $0$ &$0.981\pm0.018$ &  $0.974\pm0.010$& $1$ &$0$ & $0.997\pm 0.001$ \\
            \hline
        \end{tabular*}\label{tab:gan}
    \end{center}
    \caption{Results of CCLVQ on CIFAR-10 and CelebA datasets. The results show that CCLVQ improves the diversity of the generated samples while maintaining the quality of the samples. We denote by $2^*$ the model with independently trained experts used in a uniform mixture. In bold, the best results. We report the mean and standard deviation over $2$ runs. The FID is lower when it is better, while the Precision and Recall are higher when they are better. The normalized entropy is close to $1$ when the repartition of the experts is uniform.}
\end{table*}
\subsection{Improving Quality of Normalizing Flows on synthetic data}
Normalizing Flows (NFs) are a class of generative models that learn a bijective mapping between a simple distribution and a complex target distribution. The networks parameterizing this mapping are designed to ensure that the Jacobian determinant of the transformation remains tractable, allowing for the computation of sample likelihoods. However, due to the invertibility constraint, NFs often struggle to capture multimodal distributions \cite{pmlr-v119-cornish20a,pmlr-v189-verine23a}. While these models are effective at producing diverse outputs, they tend to generate out-of-domain samples.
To enhance the expressivity of the model, we propose using CCLVQ. We employ the RealNVP architecture \cite{dinh2017density} and train it on a mixture of 2D Gaussians. The model is trained with \( n = 1, 2, 4, 8, \) and \( 16 \) experts, where only the parameters corresponding to the highest-likelihood samples are updated.
Figure~\ref{fig:nfdata} illustrates the dataset samples. The results, presented in Figures~\ref{fig:nf1} to \ref{fig:nf16}, provide insights into how the experts are distributed across the dataspace.

\subsection{Improving Diversity Generative Adversarial Networks on CIFAR-10 and CelebA}
\paragraph{Adapting CCLVQ to GANs:} Generative Adversarial Networks are known to suffer from limited diversity in generated samples \cite{verine2023precisionrecall}. We propose to use CCLVQ to improve the diversity of GANs by learning a multimodal distribution. GANs are typically composed of two neural networks: a generator $G:\R^k\rightarrow\R^d$ mapping Gaussian samples to data-like samples and a discriminator $D:\R^d\rightarrow[0,1]$ used to estimate a divergence between a target distribution and the distribution generated by $G$. We use the framework introduced by \citet{brock2018large} where $D$ is trained based on samples from the dataset and from the generator using hinge loss, and $G$ is trained by minimizing $\ell (G(x))\coloneqq -D(G(x))$.  Here, we propose to replace the generator by a mixture of experts $G_i$. The discriminator is now trained on samples from the dataset and from each generator. However, for a given latent sample $X$, only the generator $G_i$ that minimizes $\ell(G_i(X))$ is trained.

\textbf{Training and Evaluating models} We train BigGAN models \cite{brock2018large} on CIFAR-10 and CelebA datasets. For CIFAR-10, we train a BigGAN model with $n=1$, $n=2$, and $n=4$ with two different seeds. As a baseline, we train $2$ models independently and use the uniform mixture of the two models as a comparison. For CelebA, we train a BigGAN model with $n=1$ and $n=2$ with two different seeds. To evaluate the improvements brought by CCLVQ, we use Frechet Inception Distance (FID)~\cite{heusel_gans_2017}, traditionally used to evaluate proximity to the true distribution. We also use the metrics of Precision and Recall \cite{kynkaanniemi_improved_2019} to assess independently the quality of the generated samples and their diversity. To evaluate how the experts are used, we compute the entropy of the classifier's weights normalized by the entropy of the uniform distribution. We report the mean and standard deviation over $2$ runs. The results are presented in Table~\ref{tab:gan}.  Generated samples are shown in Figures~\ref{fig:biggancifar} and \ref{fig:bigganceleba} in Appendix~\ref{appendix:biggan}.

We observe that CCLVQ improves (1) the overall performance of the model (lower FID), and (2) the diversity of the generated samples (higher Recall) with fixed Precision. We can also observe that the normalized entropy is close to $1$ indicating that the experts are used uniformly. The experiments also highlight the improvement brought by CCLVQ since the model with independently trained experts performs as well as the model with $n=1$.

\section{Conclusion and Discussion}
We introduced Conditional Competitive Learning Vector Quantization (CCLVQ) to approximate conditional distributions in this work. Our approach extends traditional vector quantization methods by incorporating learnable functions that adapt to the input distribution, enabling effective uncertainty quantification and diverse and precise sample generation. 
We demonstrated the theoretical grounding of our method and its practical applications across multiple domains, including image inpainting, denoising, and generative modeling. Experimental results validate the efficacy of CCLVQ in capturing multimodal structures while maintaining computational efficiency.

With a finite number of examples \((X_i, Y_i)\), \(1 \leq i \leq N\), the algorithm, at best, provides an approximation of \(\Loi(\tilde Y \mid \tilde X)\), where \((\tilde X, \tilde Y)\) follows the empirical distribution. If \(N\) is sufficiently large, this approximation can be accurate; however, the size of the neural networks also plays a significant role in this context. A detailed study of this problem is beyond the scope of this paper.

% Our method can be complemented with PCA to analyze variability around the conditional mean of individual experts for refined uncertainty quantification.

\clearpage
\newpage

\section*{Impact Statement}
This paper presents work whose goal is to advance the field of 
Machine Learning. There are many potential societal consequences 
of our work, none of which we feel must be specifically highlighted here.

\bibliography{main}

\bibliographystyle{plainnat}

\clearpage
\newpage

\appendix
\onecolumn

\section{Proof}
\label{appendix:proofs}

\begin{theorem*}
    There exists a quantizer $f^*:E \to (\R^d)^n$ such that
    $$ \Delta_n(f^*)=\inf_f \Delta_n(f). $$
    Moreover, for any optimal quantizer $f^*$, with probability $1$, one has
    \[
        \mathcal{W}_2\bigl(Q(X, \cdot), \widehat Q_{f^*}(X, \cdot)\bigr) = \min_{\nu \in \mathcal{P}_n(\mathbb{R}^d)} \mathcal{W}_2\bigl(Q(X, \cdot), \nu\bigr),
    \]
    where \(\mathcal{P}_n(\mathbb{R}^d)\) denotes the set of distributions on \(\mathbb{R}^d\) with support of cardinality at most \(n\).
\end{theorem*}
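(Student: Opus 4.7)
The plan is to reduce the conditional statement to its unconditional counterparts (Propositions \ref{simple1} and \ref{simple2}) applied pointwise to the probability kernel $Q(x,\cdot)$, and then to glue the resulting pointwise minimizers into a measurable map $f^*:E\to(\R^d)^n$. Define, for each $x\in E$ with $\int |y|^2 Q(x,dy)<\infty$,
\[
g(x)=\inf_{\alpha\in(\R^d)^n}\int_{\R^d}\min_{1\le i\le n}|y-\alpha_i|^2\, Q(x,dy) .
\]
By Proposition \ref{simple2} applied to the distribution $Q(x,\cdot)$, the infimum is attained, and $g(x)=\min_{\nu\in\mathcal{P}_n(\R^d)}\W_2(Q(x,\cdot),\nu)^2$.

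The key step is to produce a measurable $f^*$ with $\int\min_i|y-f_i^*(x)|^2\, Q(x,dy)=g(x)$ for almost every $x$. The map $F(x,\alpha):=\int\min_i|y-\alpha_i|^2\, Q(x,dy)$ is a Carath\'eodory function: measurability in $x$ follows because $Q$ is a probability kernel and the integrand is a Borel function of $y$, while continuity in $\alpha$ follows from dominated convergence on any bounded set of $\alpha$'s (using the envelope $2|y|^2+2\max_i|\alpha_i|^2$, integrable under our assumption). The set-valued map $x\mapsto \{\alpha : F(x,\alpha)=g(x)\}$ has nonempty values by the unconditional existence result, and a standard measurable selection theorem (Kuratowski--Ryll-Nardzewski, as invoked in the unconditional proofs of \cite{procPages,Graf&Luschgy}) yields the desired measurable $f^*$. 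For this $f^*$ one has $\Delta_n(f^*)=\E[g(X)]$, and for any measurable $f$ the pointwise bound $F(x,f(x))\ge g(x)$ gives $\Delta_n(f)\ge\E[g(X)]=\Delta_n(f^*)$, establishing optimality.

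The second assertion follows by a squeeze argument. If $f^*$ is any optimal quantizer, then $\E[F(X,f^*(X))]=\Delta_n(f^*)=\E[g(X)]$ combined with the a.s.\ inequality $F(X,f^*(X))\ge g(X)$ forces $F(X,f^*(X))=g(X)$ almost surely. The first part of the preceding theorem identifies $F(X,f^*(X))$ with $\W_2\bigpar{Q(X,\cdot),\widehat Q_{f^*}(X,\cdot)}^2$, and Proposition \ref{simple2} applied to $Q(X,\cdot)$ identifies $g(X)$ with $\min_{\nu\in\mathcal{P}_n(\R^d)}\W_2\bigpar{Q(X,\cdot),\nu}^2$, which completes the proof. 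The main obstacle is the measurable selection step, which requires verifying joint measurability of $F$ in $(x,\alpha)$ carefully enough that Kuratowski--Ryll-Nardzewski applies; once a measurable pointwise minimizer is in hand, the remaining arguments are routine fiberwise applications of the unconditional theory.
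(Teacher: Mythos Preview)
Your argument follows the same blueprint as the paper's own proof: construct $f^*$ by picking, for each $x$, an optimal unconditional quantizer of $Q(x,\cdot)$, deduce $\Delta_n(f^*)\le\Delta_n(f)$ from the pointwise inequality after conditioning on $X$, and then run the squeeze $\E[F(X,f^*(X))]=\E[g(X)]$ together with $F(X,f^*(X))\ge g(X)$ a.s.\ to force pointwise optimality and invoke Proposition~\ref{simple2}. The one substantive difference is that you explicitly worry about measurability of $f^*$ and invoke a selection theorem, whereas the paper simply writes ``for each $x\in E$ \dots choose $f^*(x)$'' and never mentions measurability at all; in that sense your version is more careful than the original. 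One small inaccuracy: the unconditional existence proofs in \cite{procPages,Graf&Luschgy} do not themselves rely on Kuratowski--Ryll-Nardzewski (there is no parameter $x$ to select over), so your parenthetical ``as invoked in the unconditional proofs'' is off---the selection argument is genuinely new to the conditional setting and you should justify it on its own terms (Carath\'eodory $F$, closed-valued argmin, weak measurability of the multifunction) rather than by appeal to those references.
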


\begin{proof}
    We construct a quantizer $f^*:E\to(\R^d)^n$ as follows: for each $x\in E$ such that
    $\int_{\R^d} \abs{y}^2 Q(x,dy)<\infty$ choose $f^*(x)=(f^*_1(x), \dots, f^*_n(x))\in(\R^d)^n$
    satisfying
    \begin{align*}
         & \int_{\mathbb{R}^d} \min_{1 \leq i \leq n} \bigl|y - f_i^*(x)\bigr|^2 Q(x, dy)                                       \\
         & = \min_{\alpha \in (\mathbb{R}^d)^n} \int_{\mathbb{R}^d} \min_{1 \leq i \leq n} \bigl|y - \alpha_i\bigr|^2 Q(x, dy).
    \end{align*}
    Such a $f^*(x)$ exists since we know that the distribution $Q(x,\cdot)$ admits at least one optimal quantizer. As $\int_{\R^d} \abs{y}^2 Q(X,dy) < \infty$ with probability $1$, $f^*(X)$ is well defined with probability $1$.
    Let us prove that $\Delta_n(f^*) \le \Delta_n(f)$ for all $f$. On the one hand, for any $f$, one has
    \begin{align*}
        \Delta_n(f^*) & = \mathbb{E}\Bigl[\min_{1 \leq i \leq n} \bigl|Y - f_i(X)\bigr|^2\Bigr]                     \\
                      & = \mathbb{E}\Bigl[ \E\Bigpar{ \min_{1 \leq i \leq n} \bigl|Y - f_i(X)\bigr|^2\mid X} \Bigr]
    \end{align*}
    On the other hand, for any $f$, with probability $1$, one has
    \begin{align*}
         & \E\Bigpar{ \min_{1 \leq i \leq n} \bigl|Y - f^*_i(X)\bigr|^2\mid X}     \\
         & = \int_{\R^d} \min_{1 \leq i \leq n} \bigl|y - f^*_i(X)\bigr|^2 Q(X,dy) \\
         & \le \int_{\R^d} \min_{1 \leq i \leq n} \bigl|y - f_i(X)\bigr|^2 Q(X,dy) \\
         & = \E\Bigpar{ \min_{1 \leq i \leq n} \bigl|Y - f_i(X)\bigr|^2\mid X}
    \end{align*}
    By taking the expectation of both side, one obtains that $\Delta_n(f^*) \le \Delta_n(f)$.

    Moreover if $f$ is optimal, that is $\Delta_n(f)=\Delta_n(f^*)$, one has necessarily, with probability $1$:
    \begin{align*}
         & \int_{\R^d} \min_{1 \leq i \leq n} \bigl|y - f_i(X)\bigr|^2 Q(X,dy)                                                  \\
         & = \int_{\R^d} \min_{1 \leq i \leq n} \bigl|y - f^*_i(X)\bigr|^2 Q(X,dy)                                              \\
         & = \min_{\alpha \in (\mathbb{R}^d)^n} \int_{\mathbb{R}^d} \min_{1 \leq i \leq n} \bigl|y - \alpha_i\bigr|^2 Q(X, dy).
    \end{align*}
    In other words, with probability $1$, $f(X)$ is an optimal quantizer for $Q(X,\cdot)$ and Proposition \ref{simple2} yields:
    \[
        \mathcal{W}_2\bigl(Q(X, \cdot), \widehat Q_{f}(X, \cdot)\bigr) = \min_{\nu \in \mathcal{P}_n(\mathbb{R}^d)} \mathcal{W}_2\bigl(Q(X, \cdot), \nu\bigr).
    \]
\end{proof}

% \begin{proposition}
%     For all $\alpha\in(\R^d)^n$, if
%     $$G_n(\alpha) = \E\Bigcro{\Bigpar{1_{\{I_\alpha(Y)=i\}} (\alpha_i-Y)}_{1\le i\le n}}$$
%     one has
%     $$ D_n\bigpar{\alpha-\epsilon G_n(\alpha)} \le
%         D_n(\alpha) -(2\epsilon-\epsilon^2)\sum_{i=1}^n \abs{G_n(\alpha)_i}^2$$
% \end{proposition}

% \begin{proof}
%     \begin{align*}
%         D_n\bigpar{\alpha-\epsilon G_n(\alpha)} & = \E\bigcro{\min_{1\le i\le n} \abs{\alpha_i-\epsilon G_n(\alpha)_i-Y}^2}        \\
%                                                 & \le \E\bigcro{\abs{\alpha_{I_\alpha(Y)}-\epsilon G_n(\alpha)_{I_\alpha(Y)}-Y}^2} \\
%                                                 & =
%     \end{align*}
% \end{proof}

\section{Experimental details}

\subsection{Inpainter Architecture and Training Details on MNIST}
\label{appendix:traning_detail_mnist}
On the Experiment~\ref{experiment:inpainting_mnist}, the Inpainter model employs an encoder-bottleneck-decoder~\cite{pathak2016context} structure for image inpainting. The encoder compresses input images using two convolutional layers with Batch Normalization and ReLU. The bottleneck captures abstract representations with two convolutional layers. The decoder reconstructs the image with transposed convolutional layers and outputs through a Sigmoid-activated layer.
Training is conducted on MNIST with upper-image masking, using a batch size of $128$ for $200$ epochs. The optimizer is Adam with a $0.001$ learning rate, and the loss function is Mean Squared Error (MSE). Ten expert models are trained to handle different reconstruction modes, with periodic addition of new experts. A noise loss regularization term is applied, and the denoising mode ensures robustness to noisy inputs.

\subsection{Comparison to one single model for reconstruction}

\begin{figure}[t]
    \centering
    \includegraphics[width=0.2\linewidth]{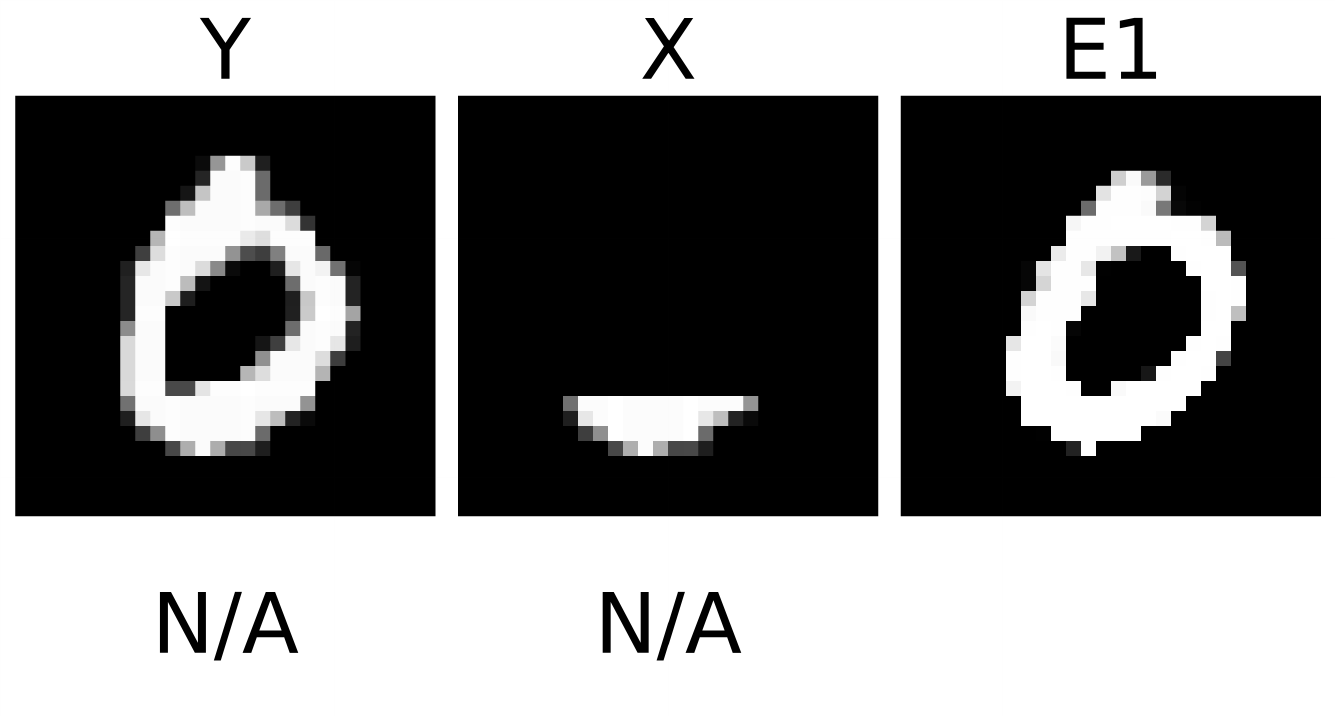}
    
    \includegraphics[width=0.2\linewidth]{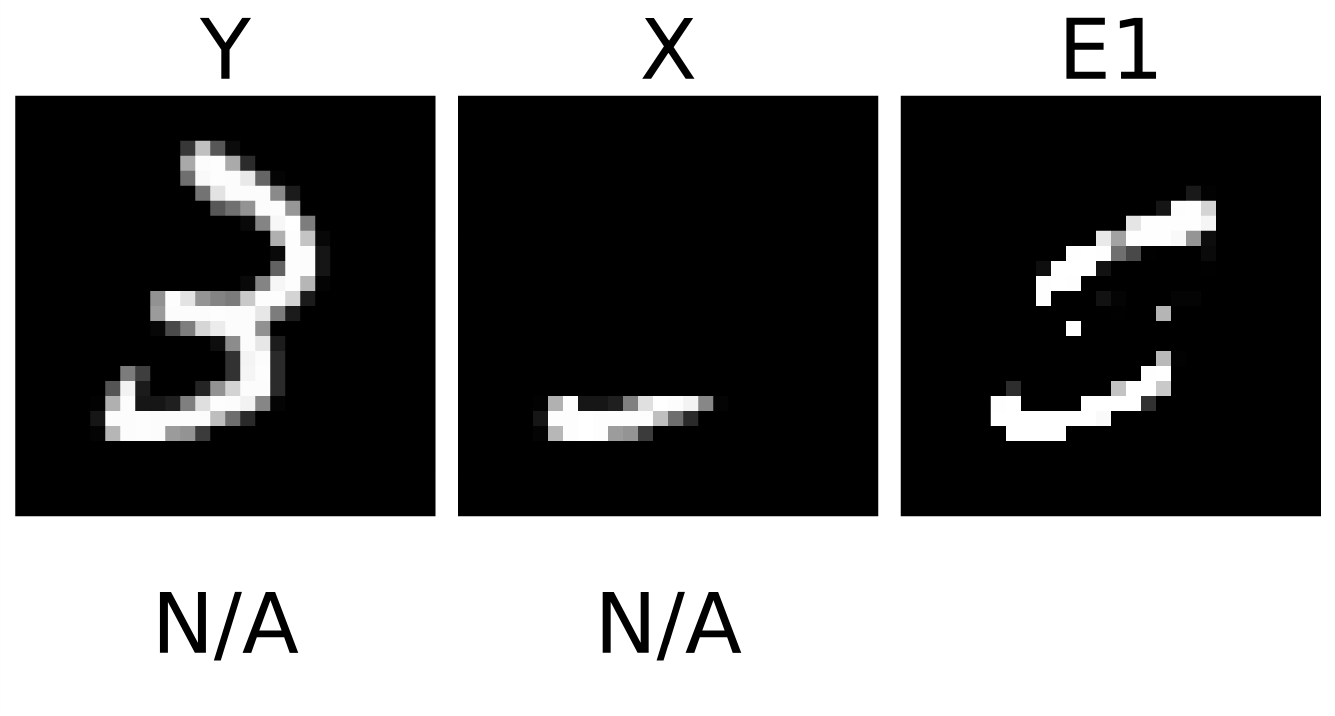}
    
    \includegraphics[width=0.2\linewidth]{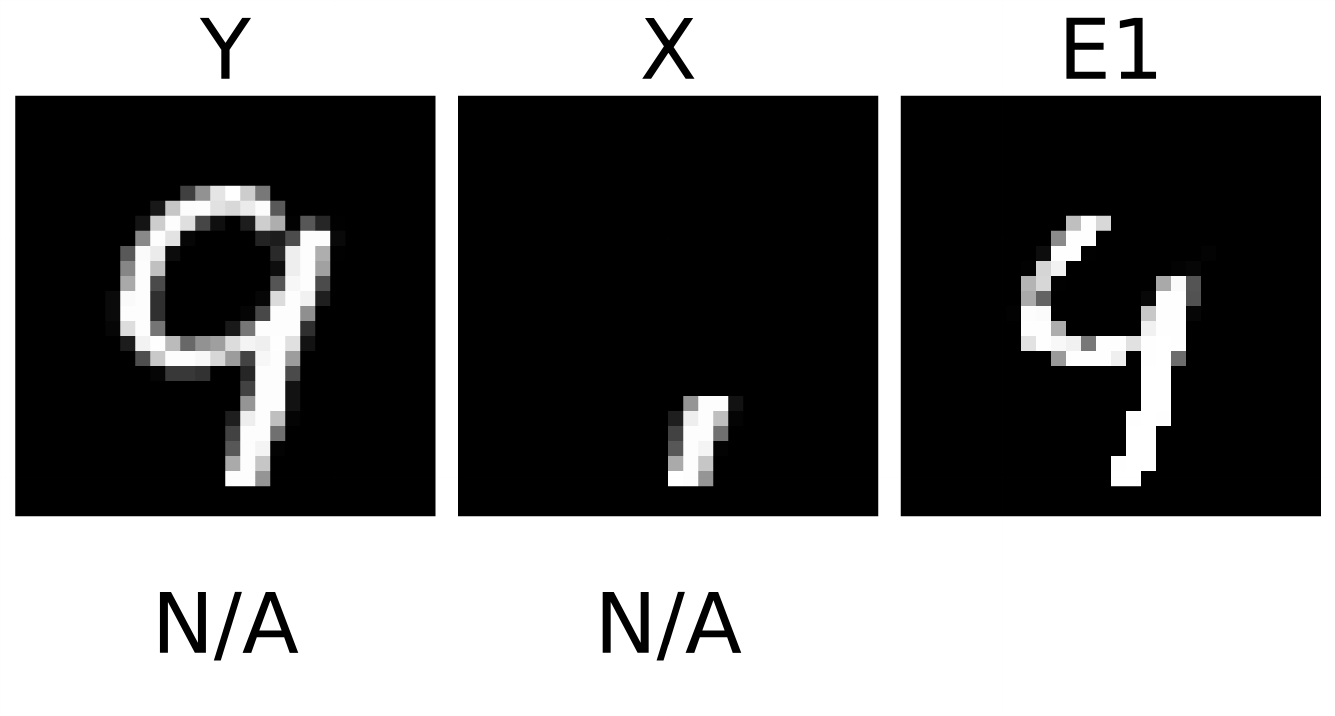}
    
    \caption{Inpainting results on MNIST for a unique model case. Each row shows the ground truth $Y$, masked input $X$, and reconstructions from the model.}
    \label{fig:one_expert_inpainting}
\end{figure}

\label{experiment:inpainting_mnist}
\begin{figure}[t]
    \centering
    \includegraphics[width=0.2\linewidth]{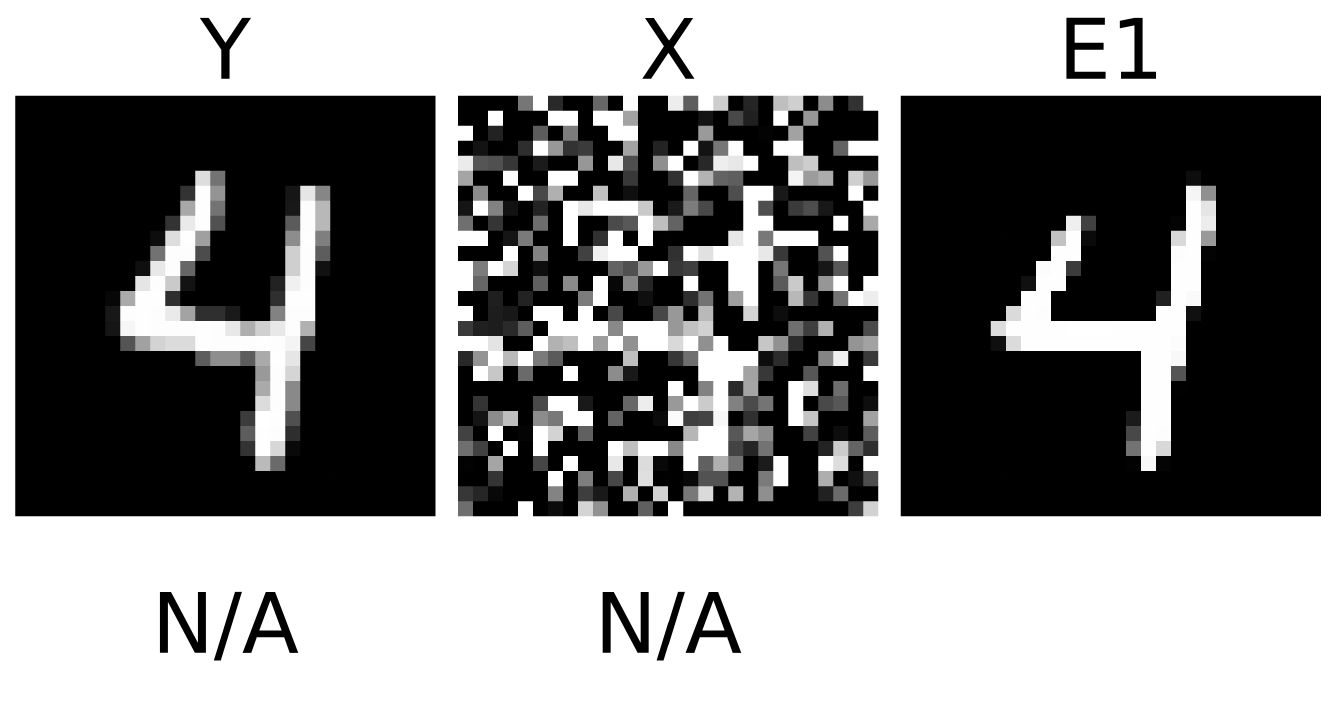}
    
    \includegraphics[width=0.2\linewidth]{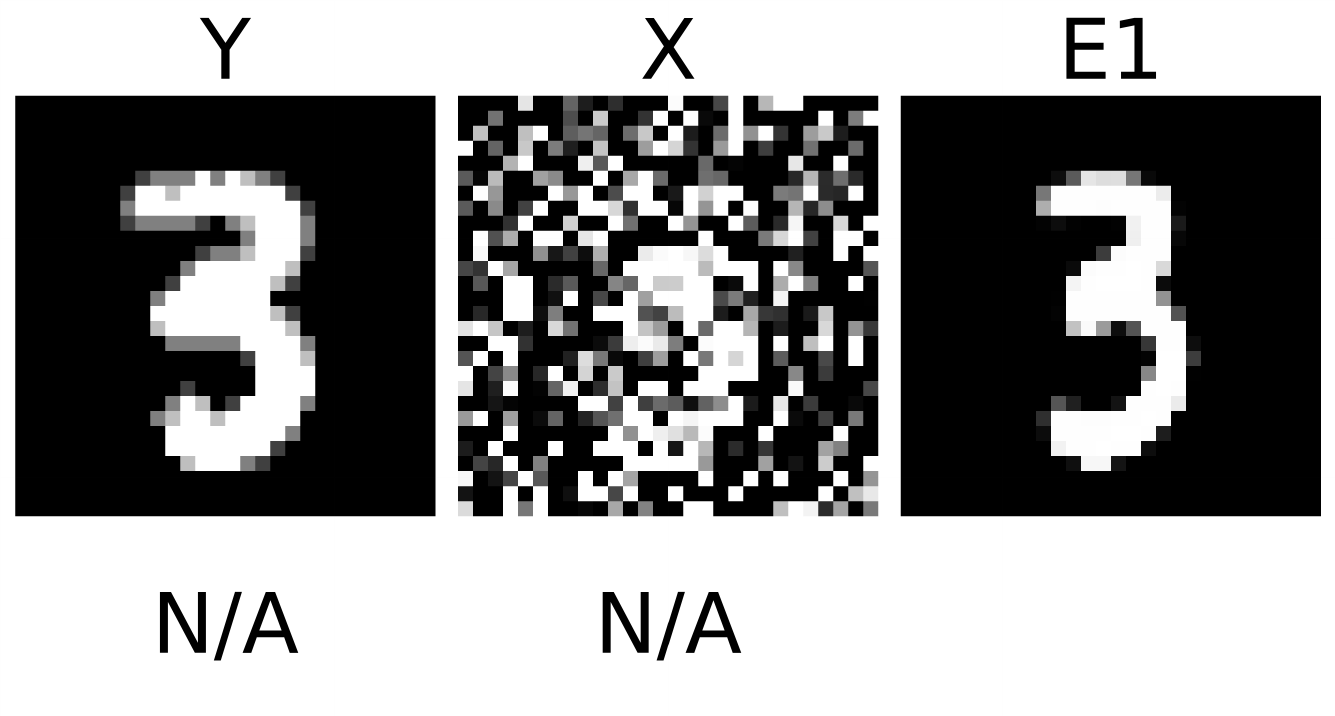}
    
    \includegraphics[width=0.2\linewidth]{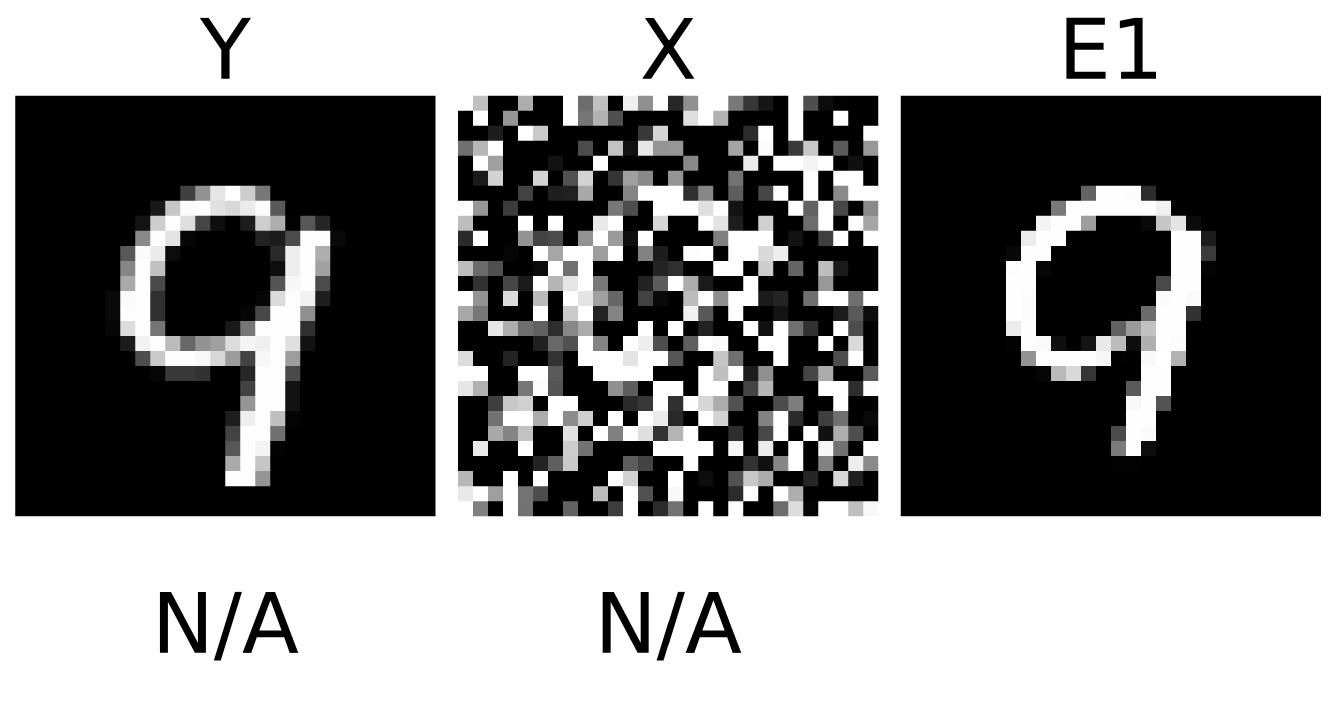}

    \caption{Denoising results on MNIST for a unique model case. Each row shows the ground truth $Y$, noised input $X$, and reconstructions from the model.}
    \label{fig:one_expert_denoising}
\end{figure}

\subsection{Training details on CelebA-HQ}
\label{appendix:traning_detail_celebA_hq}
The training process for the CelebA-HQ dataset began with a baseline model trained for 30 epochs. Starting from this baseline, $9$ additional networks were duplicated, resulting in 10 experts. These experts were trained for one additional epoch using the CCLVQ training algorithm.
The initial assignment of samples to experts was guided by a noisy loss function (small Gaussian noise added to the loss), designed to encourage each expert to specialize in a distinct subset of the data and to avoid a winner-takes-all effect.
Batch accumulation stabilized training by effectively increasing the batch size by a factor of 8 (from 16 to 128). This ensured smoother gradient updates and enhanced convergence during the training process.
The training was performed on 1 GPU A100.

% \begin{figure}
%     \centering
%     \includegraphics[width=1\linewidth]{images_celeb_hd/sample_1.pdf}
%     \caption{Diverse inpainting results on the CelebA-HQ dataset using CCLVQ. Each row shows reconstructions from different expert models after masking the area near the eyes.
%         These results illustrate the ability of CCLVQ to model multimodal distributions effectively.}
%     \label{fig:more_faces}
% \end{figure}
\begin{figure}[t]
    \centering
    \includegraphics[width=0.9\linewidth]{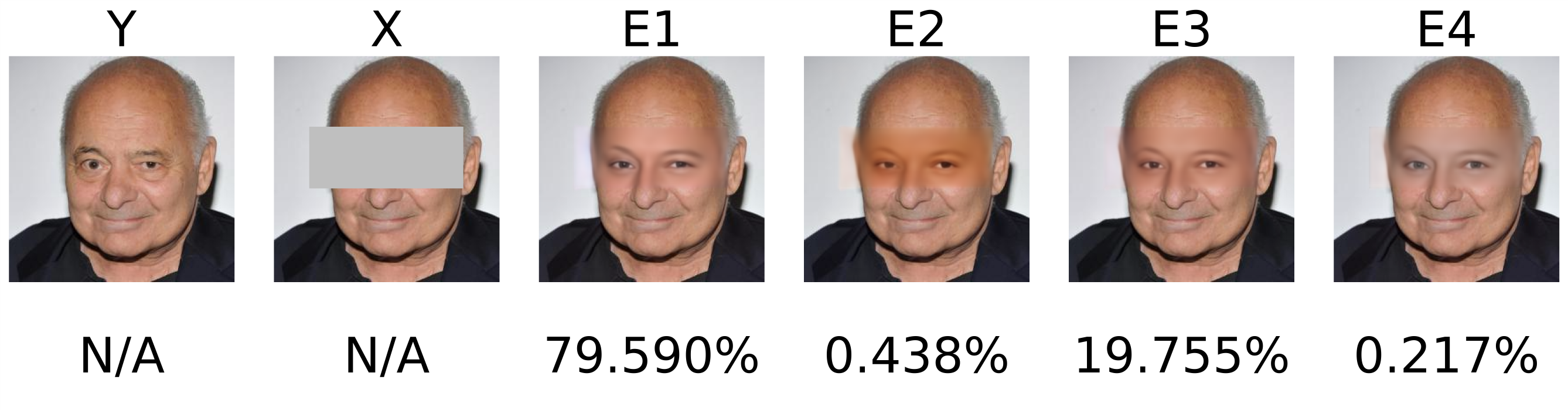}
    \includegraphics[width=0.9\linewidth]{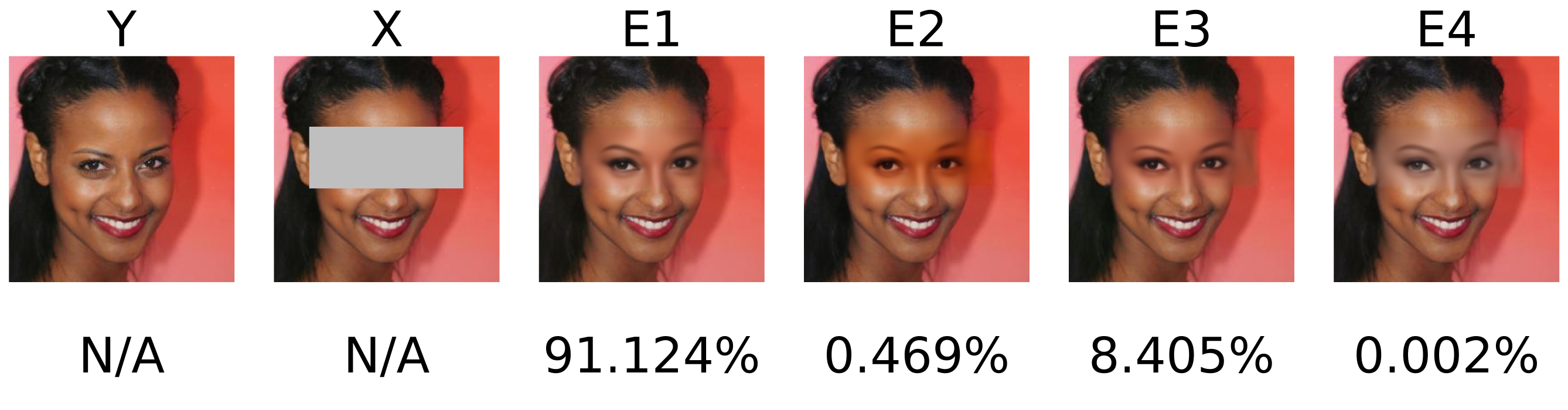}
    \includegraphics[width=0.9\linewidth]{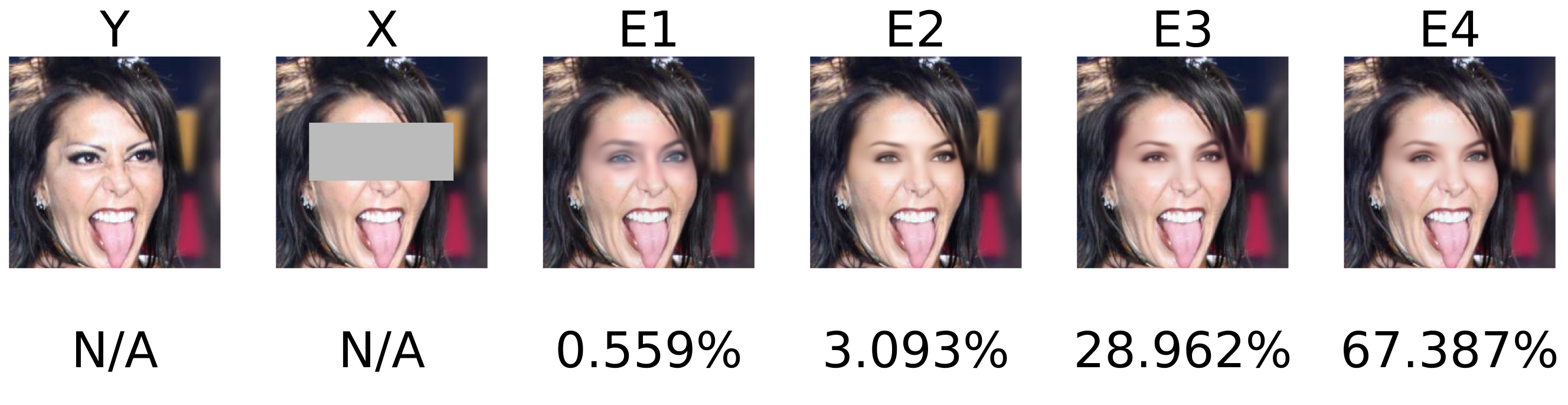}
    \centering
    \includegraphics[width=0.9\linewidth]{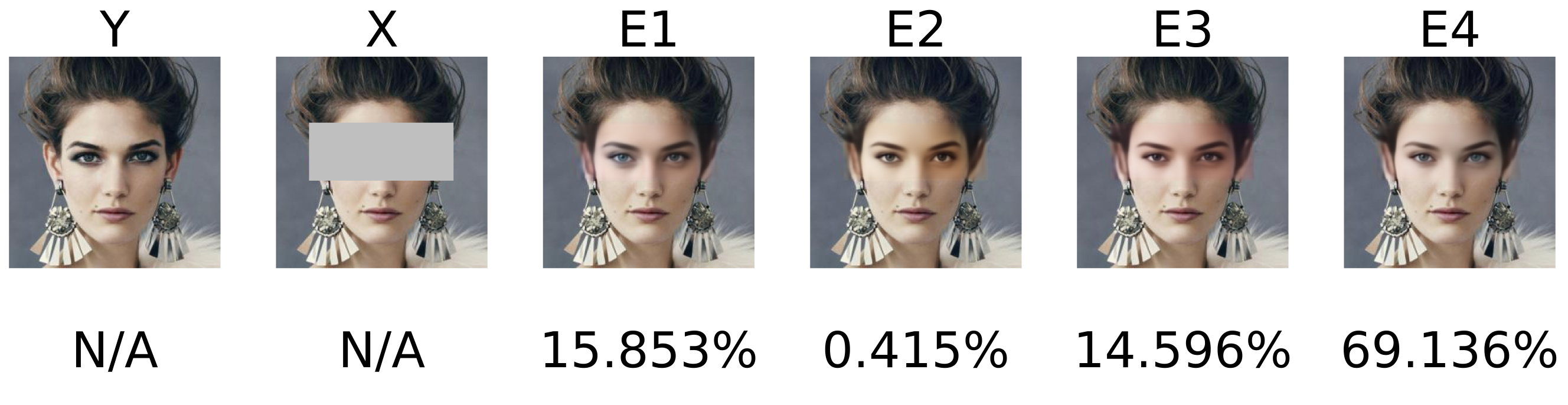}
    \centering
    \includegraphics[width=0.9\linewidth]{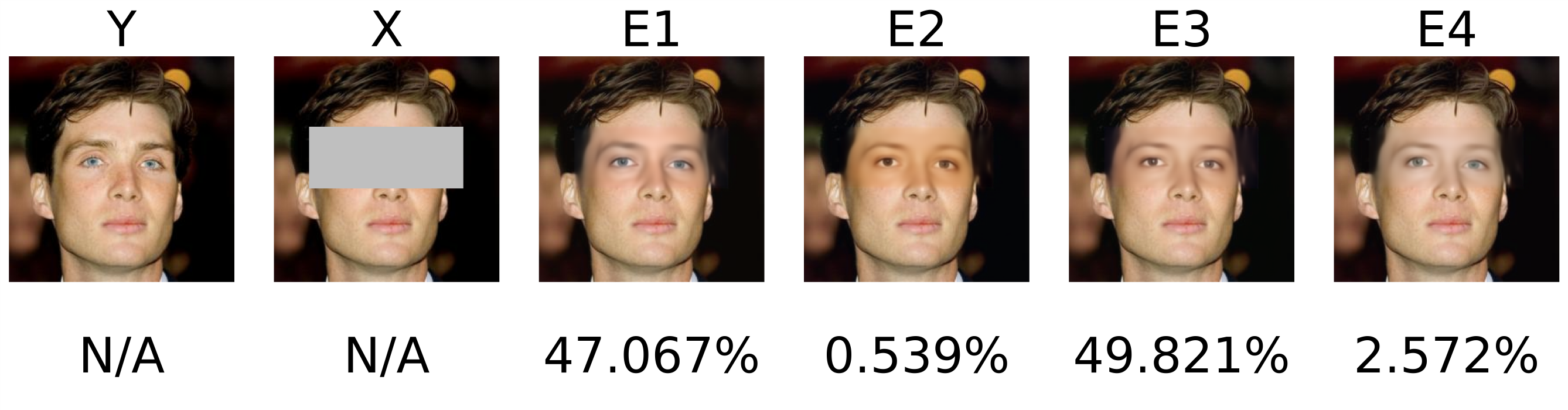}
    \includegraphics[width=0.9\linewidth]{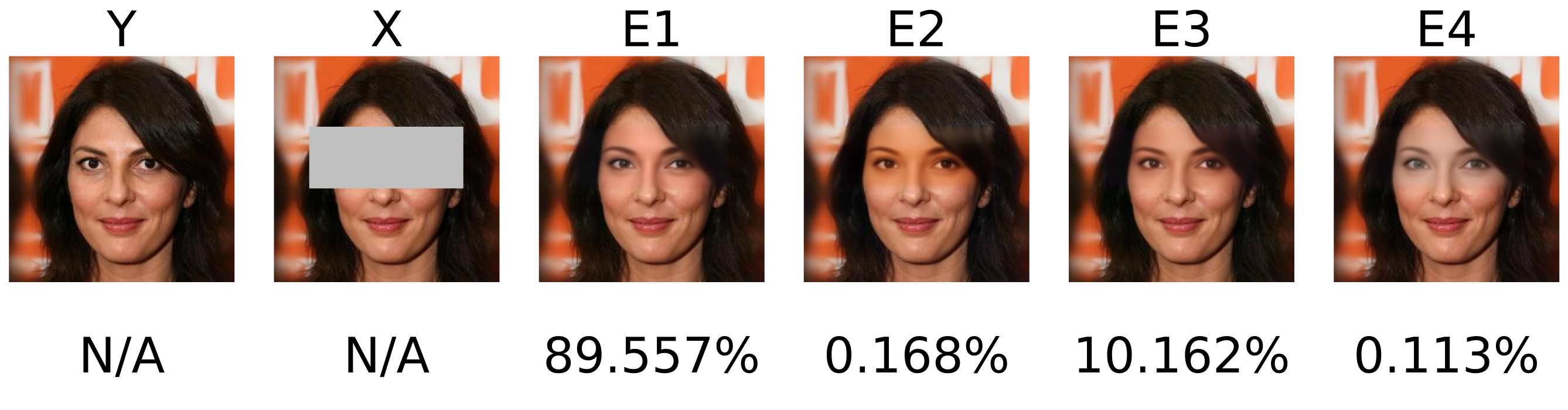}
    \caption{More inpainting results on the CelebA-HQ dataset using CCLVQ. 
    }
    \label{fig:more_celeb_inpainting}
\end{figure}

\subsection{Training details on BigGAN}\label{appendix:biggan}
The training process for BigGAN models on CIFAR-10 and CelebA datasets is as follows. We use the PyTorch implementation given in \url{https://github.com/ajbrock/BigGAN-PyTorch}. We use the same set of hyperparameters given in the repository. For multiple experts, we stack the number of gradient updates until all experts receive the same number of updates as the baseline model.  For CIFAR-10, the model has been trained on 2 GPUs V100 16Go for 500 epochs with a batch size of 50. The learning rate is set to $0.0002$ and the Adam optimizer is used. For CelebA, the model has been trained on 4 GPUs V100 32Go for 100 epochs with a batch size of 128. The learning rate is set to $0.00002$ and the Adam optimizer is used. We evaluate the models using the FID, Precision, and Recall metrics with 50k samples and $k=3$. We report some visualization of the generated data in Figures~\ref{fig:biggancifar} and \ref{fig:bigganceleba}.

\begin{figure}
    \subfigure[$n=1$]{\includegraphics[width=0.24\linewidth]{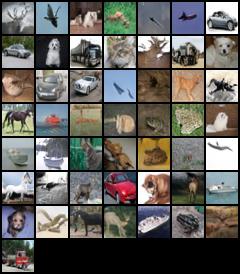}}
    \subfigure[$n=2$, Model 1]{\includegraphics[width=0.24\linewidth]{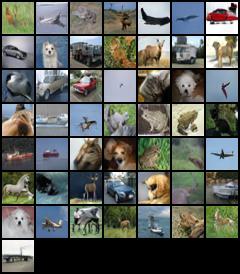}}
    \subfigure[$n=2$, Model 2]{\includegraphics[width=0.24\linewidth]{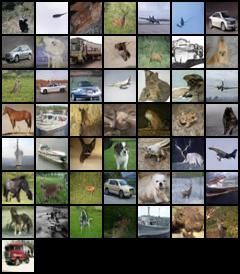}}
    \subfigure[$n=2$, Mixture]{\includegraphics[width=0.24\linewidth]{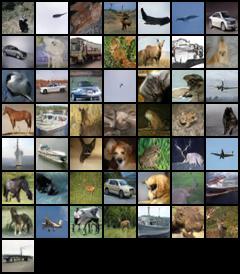}}
    \caption{Generated CIFAR-10 samples from BigGAN models with $n=1$ and $n=2$ experts.}
    \label{fig:biggancifar}
\end{figure}

\begin{figure}
    \subfigure[$n=1$]{\includegraphics[width=0.24\linewidth]{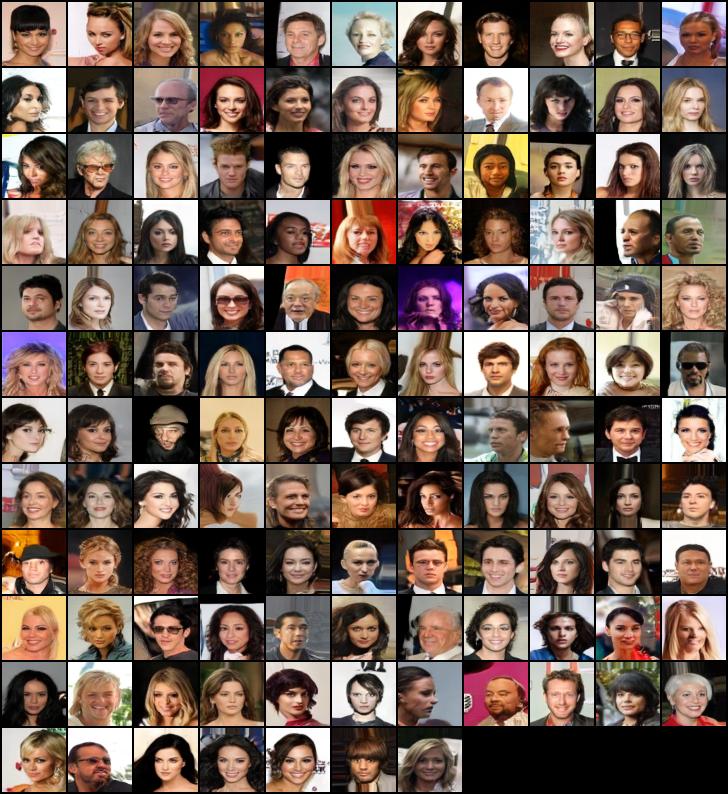}}
    \subfigure[$n=2$, Model 1]{\includegraphics[width=0.24\linewidth]{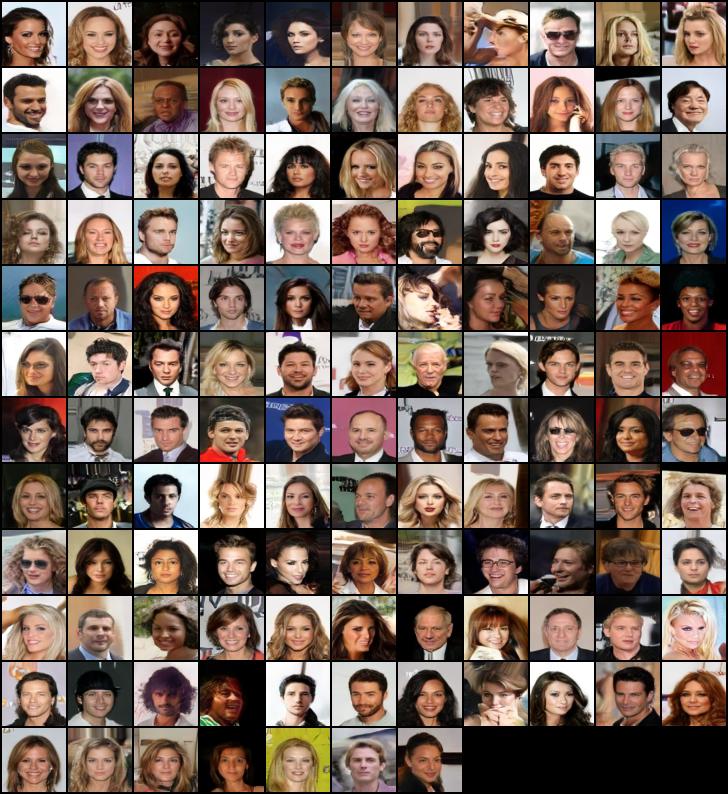}}
    \subfigure[$n=2$, Model 2]{\includegraphics[width=0.24\linewidth]{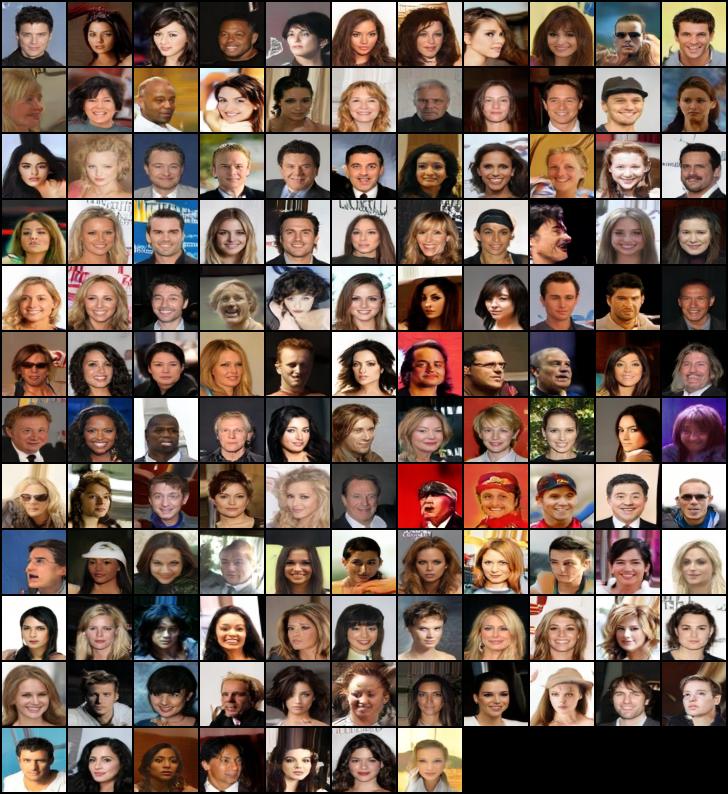}}
    \subfigure[$n=2$, Mixture]{\includegraphics[width=0.24\linewidth]{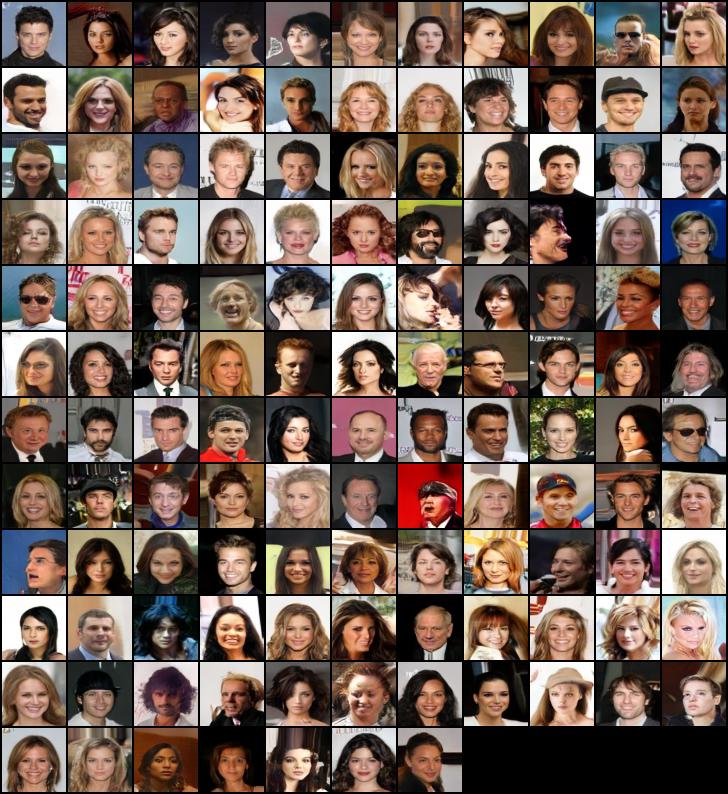}}
    \caption{Generated CelebA samples from BigGAN models with $n=1$ and $n=2$ experts.}
    \label{fig:bigganceleba}
\end{figure}

\end{document}